\def\eqref#1{Eqn.~\ref{#1}}
\def\eqref#1{equation~\ref{#1}}
\def\1{\bm{1}}
\def\va{{\bm{a}}}
\DeclareMathAlphabet{\mathsfit}{\encodingdefault}{\sfdefault}{m}{sl}
\SetMathAlphabet{\mathsfit}{bold}{\encodingdefault}{\sfdefault}{bx}{n}
\def\sR{{\mathbb{R}}}
\renewcommand{\emph}[1]{\textit{#1}}
\appto\TPTnoteSettings{\footnotesize}
\let\cref@old@stepcounter\stepcounter
\def\stepcounter#1{%
  \cref@old@stepcounter{#1}%
  \cref@constructprefix{#1}{\cref@result}%
  \@ifundefined{cref@#1@alias}%
    {\def\@tempa{#1}}%
    {\def\@tempa{\csname cref@#1@alias\endcsname}}%
  \protected@edef\cref@currentlabel{%
    [\@tempa][\arabic{#1}][\cref@result]%
    \csname p@#1\endcsname\csname the#1\endcsname}}
\newtheoremstyle{break}% name
  {}%         Space above, empty = `usual value'
  {}%         Space below
  {\itshape}% Body font
  {}%         Indent amount (empty = no indent, \parindent = para indent)
  {\bfseries}% Thm head font
  {}%        Punctuation after thm head
  {\newline}% Space after thm head: \newline = linebreak
  {}%         Thm head spec
\newtheorem{definition}{Definition}
\newtheorem{assumption}{Assumption}
\newtheorem{theorem}{Theorem}
\newtheorem{proposition}[theorem]{Proposition}
\newtheorem*{proposition-informal}{Informal Proposition}
\newtheorem{lemma}[theorem]{Lemma}
\newenvironment{prevproof}[1]{\noindent {\em {Proof of \cref{#1}:}}}{\hfill $\square$\vskip \belowdisplayskip}
\providecommand{\customgenericname}{}
\newcommand{\newcustomtheorem}[2]{%
  \newenvironment{#1}[1]
  {%
   \renewcommand\customgenericname{#2}%
   \renewcommand\theinnercustomgeneric{##1}%
   \innercustomgeneric
  }
  {\endinnercustomgeneric}
}
\pgfplotsset{compat=1.17}
\newcommand{\para}[1]{\textbf{#1}\quad}
\newcommand{\diag}[1]{\mathsf{diag}(#1)}
\newcommand{\wt}[1]{\widetilde{#1}}
\newcommand{\fC}{\mathfrak{C}}
\newcommand{\concat}{\parallel}
\newcommand{\itand}{~~{\footnotesize\textsf{and}}~~}
\newcommand{\tran}{^{\mkern-1.5mu\mathsf{T}}}
\definecolor{tab:blue}{HTML}{1f77b4}
\definecolor{tab:orange}{HTML}{ff7f0e}
\definecolor{tab:purple}{HTML}{9467bd}
\definecolor{YellowGreen}{rgb}{0.6, 0.8, 0.2}
\newcommand{\ouralgo}{\textsc{HCDC}\xspace}
\newlist{todolist}{itemize}{2}
\setlist[todolist]{label=$\square$}
\title{\huge Calibrated Dataset Condensation for Faster Hyperparameter Search}
\author{
    Mucong Ding\thanks{Department of Computer Science, University of Maryland, College Park; e-mail: {\tt mcding@umd.edu}}
    \and
    Yuancheng Xu\thanks{Department of Computer Science, University of Maryland, College Park}
    \and
    Tahseen Rabbani\footnotemark[2]
    \and
    Xiaoyu Liu\footnotemark[2]
    \and
    Brian Gravelle\thanks{Laboratory for Physical Sciences, University of Maryland}
    \and
    Teresa Ranadive\footnotemark[3]
    \and
    Tai-Ching Tuan\footnotemark[3]
    \and
    Furong Huang\footnotemark[2]
}
\begin{document}
\date{}
\maketitle

\begin{abstract}
Dataset condensation can be used to reduce 
the computational cost of training multiple models on a large dataset by condensing the training dataset into a small synthetic set. 
State-of-the-art approaches rely on matching model gradients between the real and synthetic data.
However, there is no theoretical guarantee on the generalizability of the condensed data: data condensation often generalizes poorly \emph{across hyperparameters/architectures} in practice.
In this paper, we consider a different condensation objective specifically geared toward \emph{hyperparameter search}. 
We aim to generate a synthetic validation dataset so that the validation-performance rankings of models, with different hyperparameters, on the condensed and original datasets are comparable.
We propose a novel \emph{hyperparameter-calibrated dataset condensation} (\ouralgo) algorithm, which obtains the synthetic validation dataset by matching the \emph{hyperparameter gradients} computed via implicit differentiation and efficient inverse Hessian approximation.
Experiments demonstrate that the proposed framework effectively maintains the validation-performance rankings of models and speeds up hyperparameter/architecture search for tasks on both images and graphs.

\end{abstract}

\section{Introduction}
\label{sec:intro}

%{Graph neural networks} (GNNs) \fhc{Large Neural Networks?} have found remarkable success in tackling a variety of graph-related tasks~\citep{hamilton2020graph}. However, the prevalence of large-scale graphs in real-world contexts, such as social, information, and biological networks~\citep{hu2020open}, which frequently scale up to millions/billions of nodes and edges, poses significant computational issues for training GNNs. While training a single model can be expensive, designing deep learning models for new tasks requires substantially more computation, as this involves training multiple models on the same dataset many times to verify the design choice (e.g., the architecture and hyperparameter choice~\citep{elsken2019neural}). Towards this end, we consider the following question: how can we reduce the computational cost for training {multiple} models on the same dataset, for \emph{hyperparameter search/optimization}? \ycc{change to the next paragraph?}

Deep learning has achieved great success in various fields, such as computer vision and graph related tasks.
However, the computational cost of training state-of-the-art neural networks is rapidly increasing due to growing model and dataset sizes. 
Moreover, designing deep learning models usually requires training numerous models on the same data to obtain the optimal hyperparameters and architecture~\citep{elsken2019neural}, posing significant computational challenges. 
Thus, reducing the computational cost of repeatedly training on the same dataset is crucial. 
We address this problem from a data-efficiency perspective and consider the following question: how can one reduce the training data size for faster \emph{hyperparameter search/optimization} with minimal performance loss?

Recently, \textit{dataset distillation/condensation}~\citep{wang2018dataset} is proposed as an effective way to reduce sample size.
This approach involves producing a small \emph{synthetic} dataset to replace the original larger one, so that the test performance of the model trained on the synthetic set is comparable to that trained on the original. 
Despite the state-of-the-art performance achieved by recent dataset condensation methods when used to train a single pre-specified model, it remains challenging to utilize such methods effectively for hyperparameter search.  
Current dataset condensation methods perform poorly when applied to neural architecture search (NAS)~\citep{elsken2019neural} and when used to train deep networks beyond the pre-specified architecture~\citep{cui2022dc}.
Moreover, there is little or even a negative correlation between the performance of models trained on the synthetic vs.~the full dataset, across architectures: often, one architecture achieves higher validation accuracy when trained on the original data relative to a second architecture, but obtains lower {validation} accuracy than the second when trained on the synthetic data. 
Since architecture performance ranking is not preserved when the original data is condensed, current data condensation methods are inadequate for NAS.
This issue stems from the fact that existing condensation methods are designed on top of a single pre-specified model, and thus the condensed data may overfit this model. 

We ask: 
\emph{is it possible to preserve the architecture/hyperparameter search outcome when the original data is replaced by the condensed data?}

\begin{figure}[!htbp]
\centering
\includegraphics[width=\textwidth, clip]{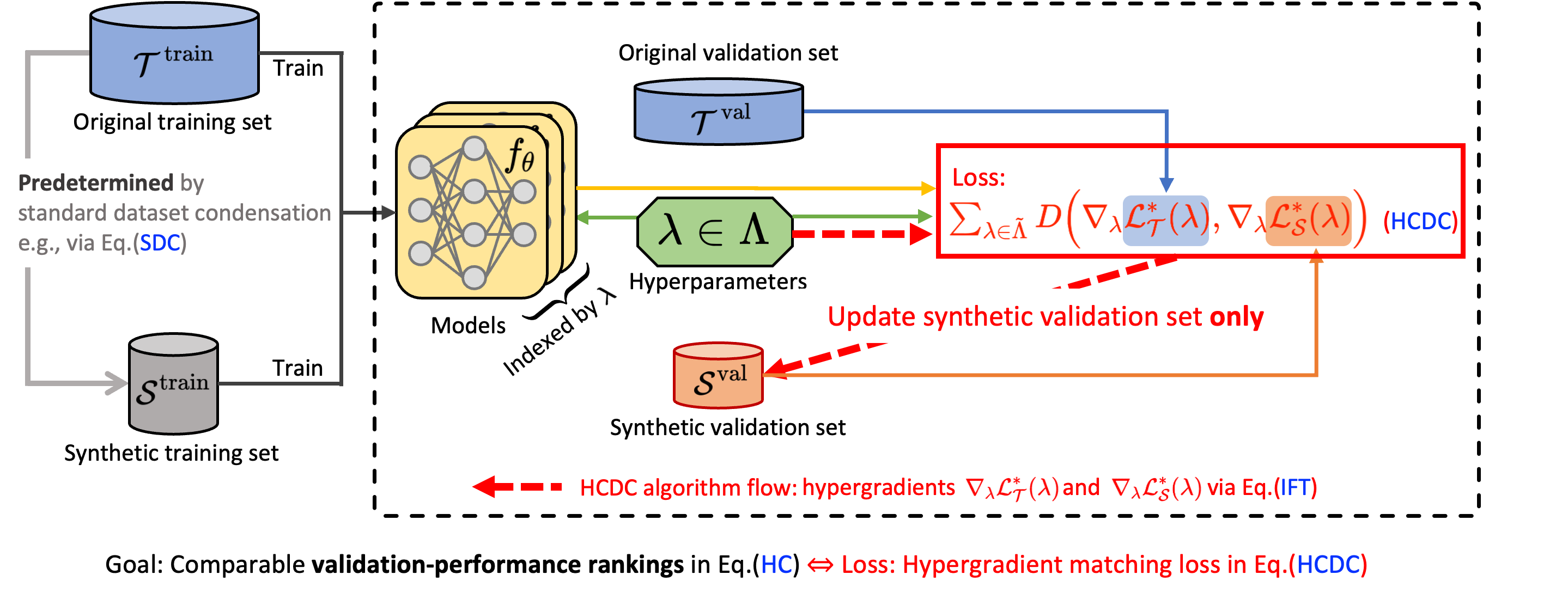}
\caption{Hyperparameter Calibrated Dataset Condensation (HCDC) aims to find a small \textcolor{orange}{validation} \textcolor{orange}{dataset} such that the validation-performance rankings of the models with different hyperparameters are comparable to the large original dataset's. Our method realizes this goal \big(\cref{eq:hc}\big) by learning the synthetic validation set to match the hypergradients w.r.t the hyperparameters \big(\cref{eq:hcdc} in the \textcolor{red}{``{Loss}'' box}\big).   
\textit{Our contribution is depicted within the big black dashed box: the algorithm flow is indicated through the \textcolor{red}{red dashed arrows}.}
The synthetic training set is predetermined by any standard dataset condensation (SDC) methods (e.g.,~\cref{eq:sdc-bl}).
The synthetic training and validation datasets obtained can later be used for hyperparameter search using only a fraction of the original computational load. A more detailed diagram is depicted in \cref{fig:schematic-appendix} in \cref{apd:diagram}.
}
\label{fig:schematic}
\end{figure}

To answer this question, we reformulate the dataset condensation problem using a hyperparameter optimization (HPO) framework~\citep{feurer2019hyperparameter}, with the goal of preserving architecture/hyperparameter search outcomes over \textit{multiple} architectures/hyperparameters, just as standard dataset condensation preserves generalization performance results for a \textit{single pre-specified} architecture.  
This is illustrated in \cref{fig:schematic}'s ``Goal'' box.
However, solving the resulting nested optimization problem is tremendously difficult.
%Therefore, we consider an alternative objective to preserve the relative performance rankings} of different architectures.
Therefore, we consider an alternative objective and show that architecture performance ranking preservation %on the original and condensed dataset 
is equivalent to aligning the \emph{hyperparameter gradients} (or \emph{hypergradients} for short), of this objective, in the context of dataset condensation. 
This is illustrated as the ``Loss'' box in \cref{fig:schematic}.
%on both datasets. 
Thus, we propose \emph{hyperparameter calibrated dataset condensation (\ouralgo)}, a novel condensation method that preserves hyperparameter performance rankings 
%search 
by aligning the hypergradients computed using the condensed data to those computed using the original dataset, see~\cref{fig:schematic}.

Our implementation of~\ouralgo is efficient and scales linearly with respect to the size of hyperparameter search space.
% In \ouralgo, the 
Moreover, hypergradients are efficiently computed with constant memory overhead, using the implicit function theorem (IFT) and the Neumann series approximation of an inverse Hessian~\citep{lorraine2020optimizing}.
We also specifically consider how to apply~\ouralgo to the practical architecture search spaces for image and graph datasets.

Experiments demonstrate that our proposed~\ouralgo algorithm drastically increases the correlation between the architecture rankings of models trained on the condensed dataset and those trained on the original dataset, for both image and graph data.
Additionally, the test performance of the highest ranked architecture determined by the condensed dataset is comparable to that of the true optimal architecture determined by the original dataset.
Thus, \ouralgo can enable faster hyperparameter search and obtain high performance accuracy by choosing the highest ranked hyperparameters, while the other condensation and coreset methods cannot.
We also demonstrate that condensed datasets obtained with~\ouralgo are compatible with off-the-shelf architecture search algorithms with or without parameter sharing.

We summarize our contributions as follows:
%Our contributions can be summarized as follows:
% Theoretically, 
\textbf{(1)} We study the data condensation problem for hyperparameter search and show that 
% preserving the relative performance ranking on the condensed and original dataset 
performance ranking preservation is equivalent to hypergradient alignment in this context.
%of the hypergradients computed from both datasets.
\textbf{(2)} We propose \ouralgo, which synthesizes condensed data by aligning the hypergradients of the objectives associated with the condensed and original datasets for faster hyperparameter search.
\textbf{(3)} We present experiments, for which \ouralgo drastically reduces the search time and complexity of off-the-shelf NAS algorithms, for both image and graph data, while preserving the search outcome with high accuracy.

\section{Standard Dataset Condensation}
\label{sec:preliminaries}

Consider a classification problem {where} the original dataset $\mathcal{T}^{\mathrm{train}} = \{(x_i,y_i)\}_{i=1}^{n}$ consists of $n$ (input, label) pairs sampled from the original data distribution $P_{\mathcal{D}}$.  
To simplify notation, we replace $\mathcal{T}^{\mathrm{train}}$ with $\mathcal{T}$ when the context is clear.
The classification task goal is to train a function $f_{\theta}$ (e.g., a deep neural network), with parameter $\theta$, to correctly predict labels $y$ from input{s} $x$. {Obtaining} $f_{\theta}$ involves {optimizing an} empirical loss objective determined by $\mathcal{T}^{\mathrm{train}}$:
\begin{equation}
\theta^{\mathcal{T}}= \arg\min_{\theta} \mathcal{L}^{\mathrm{train}}_{\mathcal{T}}(\theta,\lambda) \text{,~where~} \mathcal{L}^{\mathrm{train}}_{\mathcal{T}}(\theta,\lambda) \coloneqq \frac{1}{|\mathcal{T}^{\mathrm{train}}|}\sum_{(x,y)\in \mathcal{T}^{\mathrm{train}}}l(f_{\theta}(x),y,\lambda),
\end{equation}
where $\lambda$ {denotes} the {model} hyperparameter (e.g., the neural network architecture {that characterizes} $f_{\theta}$), and $l(\cdot,\cdot,\cdot)$ is a task-specific loss function that depends {on $\lambda$.}

{Dataset condensation involves generating} a small set of {$c\ll n$} synthesized samples $\mathcal{S} = \{x_i',y_i'\}_{i=1}^{c}$, with which to replace the original {training dataset $\mathcal{T}$}. {Using} the condensed dataset $\mathcal{S}$, one can {obtain} $f_{\theta}$ {with parameter} $\theta = \theta^{\mathcal{S}} = \arg\min_{\theta} \mathcal{L}^{\mathrm{train}}_{\mathcal{S}}(\theta,\lambda)$, where $\mathcal{L}^{\mathrm{train}}_{\mathcal{S}} = \frac{1}{|\mathcal{S}|}\sum_{(x,y)\in \mathcal{S}}l(f_{\theta}(x),y,\lambda)$. 
{The goal is for} the generalization performance of the model {$f_{\theta^{\mathcal{S}}}$ obtained using} the condensed data to 
{approximate that of} $f_{\theta^{\mathcal{T}}}$, i.e., $\mathbb{E}_{(x,y)\sim P_{\mathcal{D}}} [l(f_{\theta^{\mathcal{T}}}(x),y,\lambda)] \approx \mathbb{E}_{(x,y)\sim P_{\mathcal{D}}}  [l(f_{\theta^{\mathcal{S}}}(x),y,\lambda)]$.

Next, we review the bi-level optimization formulation of the \emph{standard dataset condensation} (SDC)~\citep{wang2018dataset} and one of its efficient solutions using gradient matching~\citep{zhao2020dataset}.

\para{SDC's objective.} 
% Standard dataset condensation aims to find a synthetic graph $\mathcal{S}=(A',X',\mathbf{y'})$ of size $c\ll n$, with (weighted) adjacency matrix $A'\in\mathbb{R}_{\geq0}^{c\times c}$, node features $X'\in\mathbb{R}^{c\times d}$, node labels $\mathbf{y}'\in[K]^{c}$, and (possibly) train/validation\footnote{The train/validation split of synthetic data is only required by \ouralgo; see~\cref{eq:sdc-bl} vs.~\cref{eq:hcdc-bl}.} 
% splits $V'_{\mathrm{train}}\bigcup V'_{\mathrm{val}}=[c]$. %
% The goal of SDC is to obtain {comparable generalization performance} on the real graph by training on the condensed graph, i.e., $\mathcal{L}^{\mathrm{test}}_{\mathcal{T}}(\theta^{\mathcal{T}},\lambda)\approx\mathcal{L}^{\mathrm{test}}_{\mathcal{T}}(\theta^{\mathcal{S}},\lambda)$ where $\theta^\mathcal{S} =arg\min_{\theta}\mathcal{L}^{\mathrm{train}}_{\mathcal{S}}(\theta,\lambda)$ is the model parameters (of the same GNN $f_{\theta,\lambda}$) optimized on the synthetic graph. 
By posing the optimal parameters $\theta^{\mathcal{S}}(\mathcal{S})$ as a function of the condensed dataset $\mathcal{S}$, SDC can be formulated as a bi-level optimization problem as follows,
\begin{equation}
\label{eq:sdc-bl}
\mathcal{S}^* = \arg\min_{\mathcal{S}}\mathcal{L}^{\mathrm{train}}_{\mathcal{T}}(\theta^{\mathcal{S}}(\mathcal{S}),\lambda) \mathrm{,~s.t.~} \theta^{\mathcal{S}}(\mathcal{S}) \coloneqq \arg\min_{\theta}\mathcal{L}^{\mathrm{train}}_{\mathcal{S}}(\theta,\lambda).
\tag{SDC}
\end{equation}
In other words, the optimization problem in \cref{eq:sdc-bl} aims to ﬁnd the optimal synthetic dataset $\mathcal{S}$ such that the model $\theta^{\mathcal{S}}(\mathcal{S})$ trained on it minimizes the training loss over the original data $\mathcal{T}^{\mathrm{train}}$. 
However, directly solving the optimization problem in \cref{eq:sdc-bl} is difficult since it involves a nested-loop optimization and solving the inner loop for $\theta^{\mathcal{S}}(\mathcal{S})$ at each iteration requires unrolling the recursive computation graph for $\mathcal{S}$ over multiple optimization steps for $\theta$~\citep{domke2012generic}, which is computationally expensive.

%%%%%%%%%%%%%%%%%%%%%%%%%%%%%%%%%%%%%%%%
\para{SDC in a gradient matching formulation.} \cite{zhao2020dataset}~alleviate this computational issue by introducing a \emph{gradient matching} (GM) formulation.
Firstly, they formulate the condensation objective as not only achieves comparable generalization performance to $\theta^{\mathcal{T}}$ but also converges to a similar solution in the parameter space, i.e., $\theta^{\mathcal{S}}(\mathcal{S},\theta_0) \approx \theta_{\mathcal{T}}(\theta_0)$, where $\theta_0$ indicates the initialization. 
The resulting formulation is still a bilevel optimization but can be simplified via several approximations.

\textit{(1)} $\theta^{\mathcal{S}}(\mathcal{S},\theta_0)$ is approximated by the output of a series of gradient-descent updates, $\theta^{\mathcal{S}}(\mathcal{S},\theta_0)\approx\theta^{\mathcal{S}}_{t+1}\leftarrow\theta^{\mathcal{S}}_{t}-\eta\nabla_\theta\mathcal{L}^{\mathrm{train}}_{\mathcal{S}}(\theta^{\mathcal{S}}_t,\lambda)$. 
%However, the target parameter $\theta^\mathcal{T}(\theta_0)$ may be far away from $\theta^{\mathcal{S}}_{t+1}$.
In addition, \cite{zhao2020dataset} propose to match $\theta^{\mathcal{S}}_{t+1}$ with incompletely optimized $\theta^{\mathcal{T}}_{t+1}$ at each iteration $t$.
%\leftarrow\theta^{\mathcal{T}}_{t}-\eta\nabla_\theta\mathcal{L}^{\mathrm{train}}_{\mathcal{T}}(\theta^{\mathcal{T}}_t,\lambda)$.
Consequently, the dataset condensation objective is now $\mathcal{S}^*=\arg\min_{\mathcal{S}}\mathbb{E}_{\theta_0\sim P_{\theta_0}}[\sum_{t=0}^{T-1}D(\theta^{\mathcal{S}}_t,\theta^{\mathcal{T}}_t)]$.

\textit{(2)} If we assume $\theta^{\mathcal{S}}_t$ can always track $\theta^{\mathcal{T}}_t$ (i.e., $\theta^{\mathcal{S}}_t\approx\theta^{\mathcal{T}}_t$) from the initialization $\theta_0$ up to iteration $t$, then we can replace $D(\theta^{\mathcal{S}}_{t+1}, \theta^{\mathcal{T}}_{t+1})$ by $D(\nabla_\theta\mathcal{L}^{\mathrm{train}}_{\mathcal{S}}(\theta^{\mathcal{S}}_t,\lambda),\nabla_\theta\mathcal{L}^{\mathrm{train}}_{\mathcal{T}}(\theta^{\mathcal{T}}_t,\lambda))$.
%and then by this approximation $D(\nabla_\theta\mathcal{L}^{\mathrm{train}}_{\mathcal{S}}(\theta^{\mathcal{S}}_t,\lambda),\nabla_\theta\mathcal{L}^{\mathrm{train}}_{\mathcal{T}}(\theta^{\mathcal{S}}_t,\lambda))$. 
%The cosine distance $D(\cdot,\cdot)=\cos(\cdot,\cdot)$ works well in practice.
The final objective for the GM formulation is,
\begin{equation}
\label{eq:sdc-gm}
\min_{\mathcal{S}}\mathbb{E}_{\theta_0\sim P_{\theta_0}}\Big[\sum_{t=0}^{T-1}D\Big(%&
\nabla_\theta\mathcal{L}^{\mathrm{train}}_{\mathcal{S}}(\theta^{\mathcal{S}}_t,\lambda),
\nabla_\theta\mathcal{L}^{\mathrm{train}}_{\mathcal{T}}(\theta^{\mathcal{S}}_t,\lambda)\Big)\Big].
\end{equation}

% With this gradient matching objective, we obtain a single deep network with parameters $\theta$ trained on the condensed graph $\mathcal{S}$. The condensed graph $\mathcal{S}$ is optimized such that the distance between the gradient vectors of $\mathcal{L}^{\mathrm{train}}_{\mathcal{T}}$ and of $\mathcal{L}^{\mathrm{train}}_{\mathcal{S}}$ w.r.t. the parameters $\theta$ is minimized. Cosine distance $D(\cdot,\cdot)=\cos(\cdot,\cdot)$ works well in practice~\citep{zhao2020dataset}.

\paragraph{Challenge of varying hyperparameter $\lambda$.}
In the formulation of the SDC, the condensed data $\mathcal{S}$ is learned with a fixed hyperparameter $\lambda$, e.g., a pre-specified neural network architecture. 
As a result, the condensed data trained with SDC's objective performs poorly on hyperparameter search~\citep{cui2022dc}, which requires the performance of models under varying hyperparameters to behave consistently on the original and condensed dataset. 
In the following, we tackle this issue by reformulating the dataset condensation problem under the hyperparameter optimization framework. 
\section{Hyperparameter Calibrated Dataset Condensation}
\label{sec:method}

In this section, we would like to develop a condensation method specifically for preserving the outcome of \emph{hyperparameter optimization} (HPO)  on the condensed dataset across different architectures/hyperparameters for faster hyperparameter search. 
This requires dealing with varying choices of hyperparameters so that the relative performances of different hyperparameters on the condensed and original datasets are consistent. 
% Standard dataset condensation objectives in Equation~\ref{eq:sdc-bl} and Equation~\ref{eq:sdc-gm} fail at hyperparameter search since they synthesize the condensed data using fixed hyperparameters. 
% To this end, we will directly study data condensation in the HPO framework and propose to use the notion of hyperparameter calibration xxx
We first formulate the data condensation for hyperparameter search in the HPO framework below and then propose the hyperparameter calibrated dataset condensation framework in Section~\ref{ssec: hcdc_method} by using the equivalence relationship between preserving the performance ranking and the hypergradient alignment.

% \ycst{In this section, we propose a new condensation objective specifically for \emph{preserving the outcome of hyperparameter optimization} (HPO) on the condensed dataset.}
\para{HPO's objective.} 
Given $\mathcal{T} = \mathcal{T}^{\mathrm{train}} \bigcup \mathcal{T}^{\mathrm{val}} \bigcup \mathcal{T}^{\mathrm{test}}$, HPO aims to find the optimal hyperparameter $\lambda^\mathcal{T}$ that minimizes the validation loss of the model optimized on the training dataset $\mathcal{T}^{\mathrm{train}}$ with hyperparameter $\lambda^\mathcal{T}$, i.e.,
\begin{equation}
\label{eq:hpo}
\lambda^\mathcal{T}=\arg\min_{\lambda\in\Lambda}\mathcal{L}^*_{\mathcal{T}}(\lambda) \text{,~where~} \mathcal{L}^*_{\mathcal{T}}(\lambda) \coloneqq \mathcal{L}^{\mathrm{val}}_{\mathcal{T}}(\theta^{\mathcal{T}}(\lambda),\lambda) \text{~and~} \theta^{\mathcal{T}}(\lambda)  \coloneqq\arg\min_{\theta}\mathcal{L}^{\mathrm{train}}_{\mathcal{T}}(\theta,\lambda).
\tag{HPO}
\end{equation}
Here $\mathcal{L}^{\mathrm{val}}_{\mathcal{T}}(\theta,\lambda) \coloneqq \frac{1}{|\mathcal{T}^{\mathrm{val}}|}\sum_{(x,y)\in \mathcal{T}^{\mathrm{val}}}l(f_{\theta}(x),y,\lambda)$. 
HPO is a bi-level optimization where both the optimal parameter $\theta^{\mathcal{T}}(\lambda)$ and the optimized validation loss $\mathcal{L}^*_{\mathcal{T}}(\lambda)$ are viewed as a function of the hyperparameter $\lambda$.

\paragraph{Dataset condensation for HPO.} 
We would like to synthesize a condensed training dataset $\mathcal{S}^{\mathrm{train}}$ and a condensed validation dataset $\mathcal{S}^{\mathrm{val}}$ to replace the original $\mathcal{T}^{\mathrm{train}}$ and $\mathcal{T}^{\mathrm{val}}$ for hyperparameter search. 
Denote the synthetic dataset as $\mathcal{S} = \mathcal{S}^{\mathrm{train}} \bigcup \mathcal{S}^{\mathrm{val}}$. 
Similar to \cref{eq:hpo}, the optimal hyperparameter $\lambda^\mathcal{S}$ is defined for a given dataset $\mathcal{S}$.  
Naively, one can formulate such a problem as finding the condensed dataset $\mathcal{S}$ to minimize the validation loss on the original dataset $\mathcal{T}^{\mathrm{val}}$ as follows, which is an optimization problem similar to the standard dataset condensation in \cref{eq:sdc-bl}:
% \para{Dataset condensation for HPO.} If both the train and validation sets are defined on the condensed dataset $\mathcal{S}$, the optimal hyperparameter $\lambda^\mathcal{S}$ is well-defined. Our goal is to find the synthetic dataset $\mathcal{S}$ such that we can obtain \emph{comparable validation performance} if the hyperparameters are optimized on the condensed dataset, i.e., $\mathcal{L}^*_{\mathcal{T}}(\lambda^\mathcal{T})\approx\mathcal{L}^*_{\mathcal{T}}(\lambda^\mathcal{S})$. 
% Clearly, this goal looks very similar to the goal of standard dataset condensation, preserving generalization performance $\mathcal{L}^{test}_{\mathcal{T}}(\theta^{\mathcal{T}})\approx\mathcal{L}^{test}_{\mathcal{T}}(\theta^{\mathcal{S}})$, which inspires us to formulate the new objective as a bilevel optimization problem too,
\begin{equation}
\label{eq:hcdc-bl}
\mathcal{S}^* = \arg\min_{\mathcal{S}}\mathcal{L}^*_{\mathcal{T}}\big(\lambda^\mathcal{S}(\mathcal{S})\big) \quad\mathrm{s.t.}\quad \lambda^\mathcal{S}(\mathcal{S}) \coloneqq \arg\min_{\lambda\in\Lambda}\mathcal{L}^*_{\mathcal{S}}(\lambda),    
\end{equation}
where the optimized validation losses $\mathcal{L}^*_{\mathcal{T}}(\cdot)$ and $\mathcal{L}^*_{\mathcal{S}}(\cdot)$ are defined following~\cref{eq:hpo}.

However, \textbf{two challenges} exist for such a formulation. 
\textbf{Challenge (1)}: \cref{eq:hcdc-bl} is a nested optimization (for dataset condensation) over another nested optimization (for HPO), which is computationally expensive.
\textbf{Challenge (2)}: the search space $\Lambda$ of the hyperparameters can be complicated. 
In contrast to parameter optimization, where the search space is usually assumed to be the continuous and unbounded Euclidean space, the search space of the hyperparameters can be compositions of discrete and continuous spaces. 
Having such discrete components in the search space poses challenges for gradient-based optimization methods.
%\ycst{(2) In addition, another challenge lies in the search space/feasible set of the hyperparameters $\Lambda$.
%In contrast to parameter optimization, where the search space is usually assumed to be the continuous and unbounded Euclidean space, the search space of hyperparameters $\Lambda$ can be either a discrete set or a continuous one.
%Examples of discrete hyperparameters include neural network type, width, depth, batch size, etc. 
%Often we face compositions of these discrete- and continuous-natured hyperparameters, and we can either model them all as discrete ones and search by grid search, Bayesian optimization, and reinforcement learning; or relax the discrete search space to a continuous one. }

To address \textbf{Challenge (1)}, we propose an alternative objective based on the alignment of hypergradients that can be computed efficiently in \cref{ssec: hcdc_method}. 
For \textbf{Challenge (2)}, we construct the extended search space in \cref{sec:algorithm}.

\section{Hyperparameter Calibration via Hypergradient Alignment}
\label{ssec: hcdc_method}

% In this section, we propose a novel condensation method called Hyperparameter Calibrated Dataset Condensation (HCDC), which aligns \textit{hyperparameter gradients} (or \textit{hypergradients} for short) to preserve the validation performance ranking of hyperparameters.
In this section, we introduce Hyperparameter-Calibrated Dataset Condensation (HCDC), a novel condensation method designed to align \textit{hyperparameter gradients} -- referred to as \textit{hypergradients} -- thus preserving the validation performance ranking of various hyperparameters.

\para{Hyperparameter calibration.}
% We propose a sufficient and more efficient alternative to address the computational challenge of HPO in \cref{eq:hcdc-bl}.
% Specifically, we propose to identify the condensed dataset that preserves the outcome of HPO on $\Lambda$ without directly solving the HPO problem.
% We call this hyperparameter calibration, which is formally defined as follows.
To tackle the computational challenges inherent in hyperparameter optimization (HPO) as expressed in \cref{eq:hcdc-bl}, we propose an efficient yet sufficient alternative. Rather than directly solving the HPO problem, we aim to identify a condensed dataset that maintains the outcomes of HPO on the hyperparameter set $\Lambda$. We refer to this process as hyperparameter calibration, formally defined as follows.
\begin{definition}[Hyperparameter Calibration]
\label{def:hc}
Given original dataset $\mathcal{T}$, generic model $f^{\lambda}_{\theta}$, and hyperparameter search space $\Lambda$, we say a condensed dataset $\mathcal{S}$ is hyperparameter calibrated, if for any $\lambda_1\neq \lambda_2\in\Lambda$, it holds that,
\begin{equation}
\label{eq:hc}
\Big(\mathcal{L}_{\mathcal{T}}^*(\lambda_1)-\mathcal{L}_{\mathcal{T}}^*(\lambda_2)\Big)\Big(\mathcal{L}_{\mathcal{S}}^*(\lambda_1)-\mathcal{L}_{\mathcal{S}}^*(\lambda_2)\Big)>0
\tag{HC}
\end{equation}
In other words, changes of the optimized validation loss on $\mathcal{T}$ and $\mathcal{S}$ {always have the same sign}, between any pairs of hyperparameters $\lambda_1\neq\lambda_2$.
\end{definition}

% It is clear that if hyperparameter calibration (HC) is satisfied, HPO on the original and condensed datasets yields the same result. Therefore, our mission changes to \emph{ensuring hyperparameter calibration for all pairs of hyperparameters}.
It is evident that if hyperparameter calibration (HC) is satisfied, the outcomes of HPO on both the original and condensed datasets will be identical. Consequently, our objective shifts to \emph{ensuring hyperparameter calibration across all pairs of hyperparameters}.

\para{\ouralgo: hypergradient alignment objective for dataset condensation.}
To move forward, we make the assumption that there exists a continuous extension of the search space.
Specifically, the (potentially discrete) search space $\Lambda$ can be extended to a compact and connected set $\tilde{\Lambda}\supset\Lambda$.
Within this extended set, 
%assume the existence of \emph{a continuous extension of the search space}: the (possibly discrete) search space $\Lambda$ can be extended to a compact and connected set $\tilde{\Lambda}\supset\Lambda$, where 
we define a continuation of the generic model $f^{\lambda}_{\theta}$ 
% on $\tilde{\Lambda}$ so 
such that $f^{\lambda}_{\theta}$ is differentiable anywhere in $\tilde{\Lambda}$.
% \begin{assumption}[Existence of a Continual Extension of the Search Space]\label{assum:continualext}
% The (possibly discrete) search space $\Lambda$ can be extended to a compact and connected set $\tilde{\Lambda}\supset\Lambda$, where we can define continuation of the generic model $f_{\theta,\lambda}$ on $\tilde{\Lambda}$ so that $f_{\theta,\lambda}$ is differentiable anywhere in $\tilde{\Lambda}$.
% \end{assumption}
In~\cref{sec:algorithm}, we will elaborate on how to construct such an extended search space $\tilde{\Lambda}$.

To establish a new objective for hyperparameter calibration, consider the case when $\lambda_1$ is in the neighborhood of $\lambda_2$, denoted as $\lambda_1\in B_r(\lambda_2)$ for some $r>0$. In this situation, the change in validation loss can be approximated up to first-order by the hypergradients, as follows: $\mathcal{L}_{\mathcal{T}}^*(\lambda_1)-\mathcal{L}_{\mathcal{T}}^*(\lambda_2)\approx
\langle\nabla_\lambda\mathcal{L}_{\mathcal{T}}^*(\lambda_2), \Delta\lambda\rangle.$
Here, $\Delta\lambda=\lambda_1 - \lambda_2$ with $r\geq\|\Delta\lambda\|_2\to0^{+}$. 
Analogously, for the synthetic dataset we have: $\mathcal{L}_{\mathcal{S}}^*(\lambda_1)-\mathcal{L}_{\mathcal{S}}^*(\lambda_2)\approx \langle\nabla_\lambda\mathcal{L}_{\mathcal{S}}^*(\lambda_2), \Delta\lambda\rangle$.
Hence, the hyperparameter calibration condition simplifies to $ \langle\nabla_\lambda\mathcal{L}_{\mathcal{T}}^*(\lambda_2), \Delta\lambda\rangle \cdot \langle\nabla_\lambda\mathcal{L}_{\mathcal{S}}^*(\lambda_2), \Delta\lambda\rangle > 0$. 
%for any $\Delta \lambda$, which can be further simplified to 
Further simplification leads to 
$\nabla_\lambda\mathcal{L}_{\mathcal{T}}^*(\lambda)\parallel\nabla_\lambda\mathcal{L}_{\mathcal{S}}^*(\lambda)$, indicating \textit{alignment} of the two hypergradient vectors.
We formally define this hypergradient alignment and establish its equivalence to hyperparameter calibration.
% Now, with the existence of such a continual extension of the search space, if we limit our step size to be small, we only need to ensure hyperparameter calibration in \cref{eq:hc} under the special case that $\lambda_1$ is within the neighborhood of $\lambda_2$, i.e., $\lambda_1\in B_r(\lambda_2)$ for some $r>0$.
% The change in validation loss is approximated \emph{up to first-order} by the hypergradients, i.e., $\mathcal{L}_{\mathcal{T}}^*(\lambda_1)-\mathcal{L}_{\mathcal{T}}^*(\lambda_2)\approx
% \langle\nabla_\lambda\mathcal{L}_{\mathcal{T}}^*(\lambda), \Delta\lambda\rangle$, where $\lambda_1=\lambda+\Delta\lambda$, $\lambda_2=\lambda$ with $r\geq\|\Delta\lambda\|_2\to0^{+}$. 
% The hyperparameter calibration condition within this tiny neighborhood $B_r(\lambda)$ is then simplified to
% $\nabla_\lambda\mathcal{L}_{\mathcal{T}}^*(\lambda)\parallel\nabla_\lambda\mathcal{L}_{\mathcal{S}}^*(\lambda)$, i.e., the two hypergradient vectors are aligned and pointing to the same direction.

% Assuming the extended search space $\tilde{\Lambda}$ can be covered by the union of many small neighborhoods, we derive the following notion and equivalence relation of hypergradient alignment. 

\begin{definition}[Hypergradient Alignment]\label{def:hg-align} We say hypergradients are aligned in an \emph{extended} search space $\tilde{\Lambda}$, if for any $\lambda\in\tilde{\Lambda}$, it holds that $\nabla_\lambda\mathcal{L}_{\mathcal{T}}^*(\lambda)\parallel\nabla_\lambda\mathcal{L}_{\mathcal{S}}^*(\lambda)$, i.e., $D_c(\nabla_\lambda\mathcal{L}_{\mathcal{T}}^*(\lambda),\nabla_\lambda\mathcal{L}_{\mathcal{S}}^*(\lambda))=0$, where $D_c(\cdot,\cdot)=1-\cos(\cdot,\cdot)$ represents the cosine distance.
% \fh{extended} search space $\tilde{\Lambda}$ a compact and connected set $\tilde{\Lambda}\supset\Lambda$
\end{definition}

\begin{theorem}[\textbf{Equivalence between
Hypergradient Alignment and Hyperparameter Calibration}]
\label{thm:equi-hga-hc}
Hypergradient alignment (\cref{def:hg-align}) is equivalent to hyperparameter calibration (\cref{def:hc}) on a connected and compact set, e.g., the extended search space $\tilde{\Lambda}$.
\end{theorem}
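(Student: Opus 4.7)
My plan is to prove the equivalence by verifying the two implications separately, using a first-order Taylor expansion for one direction and a path-integration argument for the other. The key observation is that \cref{eq:hc} is a statement about all pairs of hyperparameters, so globally calibrating rankings is equivalent to correctly matching infinitesimal changes of the two losses in every direction, which in turn is an algebraic condition on the two hypergradient vectors at each $\lambda$.

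For the direction ``calibration implies alignment'', I would fix $\lambda \in \tilde{\Lambda}$ and an arbitrary direction $v$, and probe the calibration condition with $\lambda_2 = \lambda$ and $\lambda_1 = \lambda + \epsilon v$ for $\epsilon > 0$ small (possible since $\tilde{\Lambda}$ is open-dimensional and $\lambda$ lies in its interior, up to boundary cases handled by taking $v$ pointing inward). Taylor expansion gives
\begin{equation*}
\mathcal{L}_{\mathcal{T}}^*(\lambda_1) - \mathcal{L}_{\mathcal{T}}^*(\lambda_2) = \epsilon \langle \nabla_\lambda \mathcal{L}_{\mathcal{T}}^*(\lambda), v \rangle + o(\epsilon),
\end{equation*}
and an analogous expansion holds for $\mathcal{S}$. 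Multiplying the two expansions, invoking \cref{eq:hc}, dividing by $\epsilon^2$, and sending $\epsilon \downarrow 0$ yields $\langle \nabla_\lambda \mathcal{L}_{\mathcal{T}}^*(\lambda), v \rangle \langle \nabla_\lambda \mathcal{L}_{\mathcal{S}}^*(\lambda), v \rangle \geq 0$ for every $v$. An elementary linear-algebra lemma --- if $\langle a, v\rangle\langle b, v\rangle \geq 0$ for every $v$ and $a, b$ are both nonzero, then $a = c\, b$ for some $c > 0$, proved by taking $v$ such that $(\langle a, v \rangle, \langle b, v \rangle) = (1, -1)$ to contradict linear independence --- then forces the two hypergradients to be positively proportional, i.e., $D_c(\nabla_\lambda \mathcal{L}_{\mathcal{T}}^*(\lambda), \nabla_\lambda \mathcal{L}_{\mathcal{S}}^*(\lambda)) = 0$ everywhere on $\tilde{\Lambda}$.

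For the converse, alignment lets me write $\nabla_\lambda \mathcal{L}_{\mathcal{T}}^*(\lambda) = \alpha(\lambda)\, \nabla_\lambda \mathcal{L}_{\mathcal{S}}^*(\lambda)$ with $\alpha(\lambda) \geq 0$. Given any $\lambda_1, \lambda_2 \in \tilde{\Lambda}$, connectedness provides a piecewise-smooth path $\gamma : [0,1] \to \tilde{\Lambda}$ with $\gamma(0) = \lambda_2$ and $\gamma(1) = \lambda_1$, along which
\begin{equation*}
\mathcal{L}_{\mathcal{T}}^*(\lambda_1) - \mathcal{L}_{\mathcal{T}}^*(\lambda_2) = \int_0^1 \alpha(\gamma(t))\, \tfrac{d}{dt}\mathcal{L}_{\mathcal{S}}^*(\gamma(t))\, dt .
\end{equation*}
To match the sign of $\mathcal{L}_{\mathcal{S}}^*(\lambda_1) - \mathcal{L}_{\mathcal{S}}^*(\lambda_2) = \int_0^1 \tfrac{d}{dt}\mathcal{L}_{\mathcal{S}}^*(\gamma(t))\, dt$, I would pick $\gamma$ so that $t \mapsto \mathcal{L}_{\mathcal{S}}^*(\gamma(t))$ is monotone, built by concatenating segments of the (negative) gradient flow of $\mathcal{L}_{\mathcal{S}}^*$ between regular level sets together with short traversals within a single level set (on which alignment ensures $\mathcal{L}_{\mathcal{T}}^*$ is also constant, so no sign is lost). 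With $\mathcal{L}_{\mathcal{S}}^* \circ \gamma$ monotone and $\alpha \geq 0$, the integrand keeps a constant sign, giving the desired positivity in \cref{eq:hc} whenever $\mathcal{L}_{\mathcal{S}}^*(\lambda_1) \neq \mathcal{L}_{\mathcal{S}}^*(\lambda_2)$.

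The hard part will be constructing the monotone path in the alignment-to-calibration direction: on a general connected compact $\tilde{\Lambda}$ the level sets of $\mathcal{L}_{\mathcal{S}}^*$ may be disconnected and critical points can block a pure gradient-flow detour, so the argument must invoke a Morse-type genericity assumption on $\mathcal{L}_{\mathcal{S}}^*$ to ensure that regular sublevel sets $\{\mathcal{L}_{\mathcal{S}}^* \leq c\}$ are connected (equivalently, that regular level sets foliate $\tilde{\Lambda}$). This is also where the strict ``$>0$'' in \cref{eq:hc} effectively acquires its proper meaning --- the equivalence is cleanest for pairs on distinct level sets, with ties handled by the zero-integrand case. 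The Taylor-expansion direction, by contrast, is purely local and essentially formal once the linear-algebra lemma is in place.
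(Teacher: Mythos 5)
Your skeleton matches the paper's proof exactly: necessity via a local perturbation argument, sufficiency via path integration over the connected set $\tilde{\Lambda}$. The substantive difference is in how the sufficiency direction is closed. The paper integrates both hypergradients along an \emph{arbitrary} path $\gamma(\lambda_1,\lambda_2)$ and then asserts
\[
\Big(\int_{\gamma}\nabla_\lambda\mathcal{L}_{\mathcal{T}}^*(\lambda)\,\mathrm{d}\lambda\Big)\Big(\int_{\gamma}\nabla_\lambda\mathcal{L}_{\mathcal{S}}^*(\lambda)\,\mathrm{d}\lambda\Big)\;\geq\;\int_{\gamma}\big\langle\sqrt{\nabla_\lambda\mathcal{L}_{\mathcal{T}}^*(\lambda)},\sqrt{\nabla_\lambda\mathcal{L}_{\mathcal{S}}^*(\lambda)}\big\rangle\,\mathrm{d}\lambda\;\geq\;0
\]
by ``Cauchy--Schwarz''; as written this only goes through when the two directional derivatives $\tfrac{d}{dt}\mathcal{L}_{\mathcal{T}}^*(\gamma(t))$ and $\tfrac{d}{dt}\mathcal{L}_{\mathcal{S}}^*(\gamma(t))$ keep a constant sign along $\gamma$, which is precisely the monotonicity issue you isolate. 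Your plan of choosing $\gamma$ so that $\mathcal{L}_{\mathcal{S}}^*\circ\gamma$ is monotone is the honest way to make the integrand single-signed, and your remark that such a path need not exist without a Morse-type or connected-sublevel-set assumption points at a genuine gap that the paper's argument shares but does not acknowledge --- so your version is not weaker than the paper's, it just makes the hidden hypothesis explicit rather than burying it in a loose inequality. On the necessity side, your Taylor expansion plus the linear-algebra lemma ($\langle a,v\rangle\langle b,v\rangle\geq 0$ for all $v$ forces positive proportionality of nonzero $a,b$) is a fully worked-out version of the paper's one-line contradiction sketch; the only caveat, common to both treatments, is the degenerate case where one hypergradient vanishes, for which the cosine distance in \cref{def:hg-align} is undefined. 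In short: same route, with your sufficiency argument being the more careful of the two and correctly flagging the structural assumption the paper's proof silently requires.
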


% Since hyperparameter calibration in $\tilde{\Lambda}$ implies hyperparameter calibration in $\Lambda$, by \cref{thm:equi-hga-hc}, hypergradient alignment on $\tilde{\Lambda}$ is sufficient to ensure hyperparameter calibration on $\Lambda$, and hence the outcome of HPO over $\Lambda$ is preserved.

The implication is straightforward: if hyperparameter calibration holds in $\tilde{\Lambda}$, it also holds in $\Lambda$.
According to \cref{thm:equi-hga-hc}, achieving hypergradient alignment in $\tilde{\Lambda}$ is sufficient to ensure hyperparameter calibration in $\Lambda$. Therefore, the integrity of the HPO outcome over $\Lambda$ is maintained.

%We summarize the relations between the two notions (\cref{def:hc,def:hg-align}) as follows,
%\[
%\begin{aligned}
%\text{Hypergrad.~Alignment~in}~\tilde{\Lambda} \Longleftrightarrow \text{Hyperpara.~Calibration~in}~\tilde{\Lambda}  %\\\Longrightarrow \text{Hyperpara.~Calibration~in}~\Lambda.
%\end{aligned}
%\]

% Therefore, hypergradient alignment on $\tilde{\Lambda}$ is sufficient to ensure hyperparameter calibration on $\Lambda$, and hence the outcome of HPO over $\Lambda$ is preserved.

Consequently, the essence of our hyperparameter calibrated dataset condensation (\ouralgo) is to align/match the hypergradients calculated on both the original and condensed datasets within the extended search space $\tilde{\Lambda}$:
\begin{equation}
\label{eq:hcdc}
\mathcal{S}^*=\arg\min_{\mathcal{S}}
\sum_{\lambda\in\tilde{\Lambda}} D_c\Big(\nabla_\lambda\mathcal{L}_{\mathcal{T}}^{\mathrm{val}}\big(\theta^{\mathcal{T}}(\lambda),\lambda\big),\nabla_\lambda\mathcal{L}_{\mathcal{S}}^{\mathrm{val}}\big(\theta^{\mathcal{S}}(\lambda),\lambda\big)\Big),
\tag{{HCDC}}
\end{equation}
where the cosine distance $D_c(\cdot,\cdot)=1-\cos(\cdot,\cdot)$ is used without loss of generality.

\section{Implementations of \ouralgo}
\label{sec:algorithm}

%\fhc{(First, work on refactoring, bottom line up front (BLUF), then we can reevaluate what to move to the appendix)}

In this section, we focus on implementing the hyperparameter calibrated dataset condensation (\ouralgo) algorithm.
We address two primary challenges: \textbf{(1)} efficient approximate computation of hyperparameter gradients, often called hypergradients,  using implicit differentiation techniques; and \textbf{(2)} the efficient formation of the extended search space $\tilde{\Lambda}$.
The complete pseudocode for HCDC will be provided at the end of this section.

\subsection{Efficient Evaluation of Hypergradients}
\label{ssec:hypergrad-evaluation}

The efficient computation of hypergradients is  well-addressed in existing literature (see~\cref{sec:related}). In our HCDC implementation, we utilize the implicit function theorem (IFT) and the Neumann series approximation for inverse Hessians, as proposed by~\citet{lorraine2020optimizing}.

\para{Computing hypergradients via IFT.}
The hypergradients are the gradients of the optimized validation loss $\mathcal{L}^*_{\mathcal{T}}(\lambda)=\mathcal{L}^{\mathrm{val}}_{\mathcal{T}}(\theta^{\mathcal{T}}(\lambda), \lambda)$ with respect to the hyperparameters $\lambda$; see~\cref{apd:hypergradients} for further details.
The implicit function theorem (IFT) provides an efficient approximation to compute the hypergradients $\nabla_\lambda\mathcal{L}_{\mathcal{T}}^*(\lambda)$ and $\nabla_\lambda\mathcal{L}_{\mathcal{S}}^*(\lambda)$.
\begin{equation}
\label{eq:ift}
\nabla_\lambda\mathcal{L}_{\mathcal{T}}^*(\lambda) \approx -\big[\frac{\partial^2\mathcal{L}_{\mathcal{T}}^{\mathrm{train}}(\theta,\lambda)}{\partial\lambda \partial\theta^T}\big]\big[\frac{\partial^2\mathcal{L}_{\mathcal{T}}^{\mathrm{train}}(\theta,\lambda)}{\partial\theta \partial\theta^T}\big]^{-1} \nabla_\theta\mathcal{L}_{\mathcal{T}}^{\mathrm{val}}(\theta,\lambda)+\nabla_\lambda\mathcal{L}_{\mathcal{T}}^{\mathrm{val}}(\theta,\lambda),
\tag{IFT}
\end{equation}
where we consider the direct gradient $\nabla_\lambda\mathcal{L}_{\mathcal{T}}^{\mathrm{val}}(\theta,\lambda)$ is $0$, since in most cases the hyperparameter $\lambda$ only affects the validation loss $\mathcal{L}_{\mathcal{T}}^{\mathrm{val}}(\theta,\lambda)$ through the model function $f_{\theta,\lambda}$. 
The first term consists of the mixed partials $\big[\frac{\partial^2\mathcal{L}_{\mathcal{T}}^{\mathrm{train}}(\theta,\lambda)}{\partial\lambda \partial\theta^T}\big]$, the inverse Hessian $\big[\frac{\partial^2\mathcal{L}_{\mathcal{T}}^{\mathrm{train}}(\theta,\lambda)}{\partial\theta \partial\theta^T}\big]^{-1}$, and the validation gradients $\nabla_\theta\mathcal{L}_{\mathcal{T}}^{\mathrm{val}}(\theta,\lambda)$. While the other parts can be calculated efficiently through a single back-propagation, approximating the inverse Hessian is required.
\citet{lorraine2020optimizing} propose a stable, tractable, and efficient Neumann series approximation of the inverse Hessian as follows:
\begin{equation*}
\textstyle
\big[\frac{\partial^2\mathcal{L}_{\mathcal{T}}^{\mathrm{train}}(\theta,\lambda)}{\partial\theta \partial\theta^T}\big]^{-1}=\lim _{i \rightarrow \infty} \sum_{j=0}^{i}\big[I-\frac{\partial^2\mathcal{L}_{\mathcal{T}}^{\mathrm{train}}(\theta,\lambda)}{\partial\theta \partial\theta^T}\big]^{j},
\end{equation*}
which requires only constant memory.
When combined with~\cref{eq:ift}, the approximated hypergradients can be evaluated by employing efficient % without instantiating any matrices by using efficient 
vector-Jacobian products~\citep{lorraine2020optimizing}.

\para{Optimizing hypergradient alignment loss in~\cref{eq:hcdc}.}
To optimize the objective defined in \ouralgo (\cref{eq:hcdc}), we learn the synthetic validation set $\mathcal{S}^{\mathrm{val}}$ from scratch.
This is crucial as the hypergradients with respect to the validation losses in~\cref{eq:hcdc}, are significantly influenced by the synthetic validation examples, which are free learnable parameters during the condensation process.
In contrast, we maintain the synthetic training set $\mathcal{S}^{\mathrm{train}}$ as fixed.
For generating $\mathcal{S}^{\mathrm{train}}$, we employ the standard dataset condensation (SDC) algorithm, as described in \cref{eq:sdc-gm}.
To optimize the synthetic validation set $\mathcal{S}^{\mathrm{val}}$ with respect to the hyper-gradient loss in~\cref{eq:hcdc}, we compute the gradients of $\nabla_\theta\mathcal{L}_{\mathcal{S}}^{\mathrm{val}}(\theta,\lambda)$ and $ \nabla_\lambda\mathcal{L}_{\mathcal{S}}^{\mathrm{val}}(\theta,\lambda)$  w.r.t. $\mathcal{S}^{\mathrm{val}}$.
This is handled using an additional back-propagation step, akin to the one in SDC that calculates the gradients of $\nabla_\theta\mathcal{L}_{\mathcal{S}}^{\mathrm{train}}(\theta,\lambda)$ w.r.t $\mathcal{S}^{\mathrm{train}}$.

\subsection{Efficient Design of Extended Search Space}
\label{ssec:efficient-search-space}

\ouralgo's objective (\cref{eq:hcdc}) necessitates the alignment of hypergradients across all hyperparameters $\lambda$'s in an extended space $\tilde{\Lambda}$.
This space is a compact and connected superset of the original search space $\Lambda$.
For practical implementation, we evaluate the hypergradient matching loss using a subset of $\lambda$ values randomly sampled from $\tilde{\Lambda}$.
To enhance HCDC's efficiency within a predefined search space $\lambda$, our goal is to minimally extend this space to $\tilde{\Lambda}$ for sampling.

In the case of continuous hyperparameters, $\Lambda$ is generally both compact and connected, rending $\tilde{\Lambda}$ identical to $\Lambda$.
For discrete search spaces $\Lambda$ consisting of $p$ candidate hyperparameters,
%We can naively construct $\tilde{\Lambda}$ as $O(p^2)$ continuous paths connecting pairs of candidate hyperparameters (see~\cref{fig:trajectory-matching-discrete} in~\cref{sapd:efficient} for illustration). 
%This is undesirable due to its quadratic complexity in $p$; for example, the number of candidate architectures $p$ is often a large number for practical NAS problems.% (e.g., NAS-Bench-201~\citep{dong2020bench} consists of 15,625 candidate architectures).
we propose a linear-complexity construction for $\tilde{\Lambda}$ (in which the linearity is in terms of $p$).%, which works as follows.
Specifically, for each $i\in[p]$, we formulate an ``$i$-th HPO trajectory'', a representative path that originates from $\lambda^\mathcal{S}_{i,0}=\lambda_i\in\Lambda$ and evolves via the update rule $\lambda^\mathcal{S}_{i,t+1}\leftarrow\lambda^\mathcal{S}_{i,t}-\eta\nabla_\lambda\mathcal{L}_{\mathcal{S}}^*(\lambda^\mathcal{S}_{i,t})$, see~\cref{sapd:efficient} for details and~\cref{fig:trajectory-matching} for illustration.
%, shown as the \textcolor{tab:orange}{orange} dashed lines in~\cref{fig:trajectory-matching-discrete}.
We assume that all $p$ trajectories converge to the same or equivalent optima $\lambda^{\mathcal{S}}$, thus forming ``connected'' paths. % (i.e., \textcolor{tab:orange}{orange} dashed lines which merge at the optima $\lambda^{\mathcal{S}}$) between any pair of hyperparameters $\lambda_i\neq\lambda_j\in\Lambda$. 
%This construction is also used in a continuous search space (as shown in \cref{fig:trajectory-matching-continuous}) to save computation (except that we have to select the starting randomly points $\lambda_i\sim\mathbb{P}_{\Lambda}$).
Consequently, the extended search space $\tilde{\Lambda}$ comprises these $p$ connected trajectories, allowing us to evaluate the hypergradient matching loss along each trajectory $\{\lambda^\mathcal{S}_{i,t}\}_{t=0}^{T}$ during the iterative update of $\lambda$.

%\para{Connecting \ouralgo to SDC (\cref{eq:sdc-gm}).} Theoretically speaking, the objective of \ouralgo, preserving the outcome of hyperparameter optimization (HPO), is orthogonal to the objective of SDC \ycc{they are related, not orthogonal.}, which preserves the generalization performance. 
%Therefore \ycc{what is the causal relationship here?}, we use SDC to learn the synthetic training dataset $\mathcal{S}^{\mathrm{train}}$ in~\cref{eq:sdc-gm} and \ouralgo to learn the synthetic validation dataset $\mathcal{S}^{\mathrm{val}}$ in~\cref{eq:hcdc}. \ycc{So we are not using objective in \cref{eq:hcdc}? if so, maybe we need to list the final objective function} \ycc{It is still not clear why we use SDC for training data and HCDC for validation data}
%Learning the synthetic training and validation dataset may result in disconnected training and validation set \ycc{what is "disconnected"}, which is allowed in graph learning. \ycc{should this paragraph be put in section 3? I thought section 4 was about technical details like search space and IFT, but this paragraph is about the choice of loss functions like HCDC and SDC.}

\subsection{Pseudocode}

We conclude this section by outlining the HCDC algorithm in~\cref{alg:hcdc}, assuming a discrete and finite hyperparameter search space $\Lambda$. 
In~\cref{alg:hcdc:update-hyperparameters}, we calculate the hypergradients $\nabla_\lambda\mathcal{L}^*_\mathcal{S}(\lambda)$ using~\cref{eq:ift}.
For computing the gradient $\nabla_{\mathcal{S}^\mathrm{val}} D\big(\nabla_\lambda\mathcal{L}^*_\mathcal{T}(\lambda),\nabla_\lambda\mathcal{L}^*_\mathcal{S}(\lambda)\big)$ in~\cref{alg:hcdc:update-synthetic-set}, we note that only $\nabla_\lambda\mathcal{L}^*_\mathcal{S}(\lambda)$ is depends on $\mathcal{S}^\mathrm{val}$.
Employing~\cref{eq:ift}, we find that $\nabla_\lambda\mathcal{L}^*_\mathcal{S}(\lambda)=-\big[\frac{\partial^2\mathcal{L}_{\mathcal{S}}^{\mathrm{train}}(\theta,\lambda)}{\partial\lambda \partial\theta^S}\big]\big[\frac{\partial^2\mathcal{L}_{\mathcal{S}}^{\mathrm{train}}(\theta,\lambda)}{\partial\theta \partial\theta^S}\big]^{-1}\nabla_\theta\mathcal{L}_{\mathcal{S}}^{\mathrm{val}}(\theta,\lambda)$.
Note that there are no direct gradients, as $\lambda$ influences the loss solely through the model $f^{\lambda}_{\theta}$.
Therefore, to obtain $\nabla_{\mathcal{S}^\mathrm{val}} D\big(\nabla_\lambda\mathcal{L}^*_\mathcal{T}(\lambda),\nabla_\lambda\mathcal{L}^*_\mathcal{S}(\lambda)\big)$, we simply need to compute the gradient $\nabla_{\mathcal{S}^\mathrm{val}}\nabla_\theta\mathcal{L}_{\mathcal{S}}^{\mathrm{val}}(\theta,\lambda)$ through standard back-propagation methods, since only the validation loss term $\nabla_\theta\mathcal{L}_{\mathcal{S}}^{\mathrm{val}}(\theta,\lambda)$ depends on $\mathcal{S}^\mathrm{val}$.

\begin{figure}[!htbp]
\centering
\begin{algorithm}[H]
\setstretch{0.6}
\small
\caption{\label{alg:hcdc}Hyperparameter Calibrated Dataset Condensation (HCDC)}
\DontPrintSemicolon
\KwIn{Original dataset $\mathcal{T}$, a set of NN architectures $f_{\theta}$, hyperparameter search space $\lambda\in\Lambda=\{\lambda_1,\ldots,\lambda_p\}$, predetermined condensed training data $\mathcal{S}^\mathrm{train}$ learned by standard dataset condensation (e.g., \cref{eq:sdc-gm}), randomly initialized synthetic examples $\mathcal{S}^\mathrm{val}$ of $C$ classes.}
\For{repeat $k=0,\ldots,K-1$}{
    \ForEach{hyperparameters $\lambda=\lambda_1,\ldots,\lambda_p$ in $\Lambda$}{
        Initialize model parameters $\theta\leftarrow\theta_0\sim P_{\theta_0}$\;
        \For{epoch $t=0,\ldots,T_\theta-1$}{
            Update model parameters $\theta \leftarrow \theta - \eta_\theta \nabla_\theta\mathcal{L}^\mathrm{train}_\mathcal{S}(\theta,\lambda)$\;
            \If{$t\mod T_\lambda=0$}{
                Update hyperparameters $\lambda \leftarrow \lambda - \eta_\lambda\nabla_\lambda \mathcal{L}^*_\mathcal{S}(\lambda)$\;\label{alg:hcdc:update-hyperparameters}
            }
            Update the synthetic validation set $\mathcal{S}^\mathrm{val}\leftarrow \mathcal{S}^\mathrm{val} - \eta_\mathcal{S}\nabla_{\mathcal{S}^\mathrm{val}} D\big(\nabla_\lambda\mathcal{L}^*_\mathcal{T}(\lambda),\nabla_\lambda\mathcal{L}^*_\mathcal{S}(\lambda)\big)$\;\label{alg:hcdc:update-synthetic-set}
        }
    }
}
\Return{Condensed validation set $\mathcal{S}^\mathrm{val}$.}
\end{algorithm}
\end{figure}

For a detailed complexity analysis of~\cref{alg:hcdc} and further discussions, refer to~\cref{apd:complexity-analysis}.

\section{Related Work}
\label{sec:related}

The traditional way to simplify a dataset is \textbf{coreset selection}~\citep{toneva2018empirical,paul2021deep}, where critical training data samples are chosen based on heuristics like diversity~\citep{aljundi2019gradient}, distance to the dataset cluster centers~\citep{rebuffi2017icarl,chen2010super} and forgetfulness~\citep{toneva2018empirical}. 
However, the performance of coreset selection methods is limited by the assumption of the existence of representative samples in the original data, which may not hold in practice.

To overcome this limitation, \textbf{dataset distillation/condensation}~\citep{wang2018dataset} has been proposed as a more effective way to reduce sample size.
Dataset condensation (or dataset distillation) is first proposed in~\citep{wang2018dataset} as a learning-to-learn problem by formulating the network parameters as a function of synthetic data and learning them through the network parameters to minimize the training loss over the original data. 
This approach involves producing a small \emph{synthetic} dataset to replace the original larger one, so that the test/generalization performance of the model trained on the synthetic set is comparable to that trained on the original. 
However, the nested-loop optimization precludes it from scaling up to large-scale in-the-wild datasets. \citet{zhao2020dataset} alleviate this issue by enforcing the gradients of the synthetic samples w.r.t. the network weights to approach those of the original data, which successfully alleviates the expensive unrolling of the computational graph. Based on the meta-learning formulation in~\citep{wang2018dataset}, \citet{bohdal2020flexible} and \citet{nguyen2020dataset,nguyen2021dataset} propose to simplify the inner-loop optimization of a classification model by training with ridge regression which has a closed-form solution, while \citet{such2020generative} model the synthetic data using a generative network. To improve the data efficiency of synthetic samples in the gradient-matching algorithm, \citet{zhao2021dataseta} apply differentiable Siamese augmentation, and~\citet{kim2022dataset} introduce efficient synthetic-data parametrization.

% A line of work~\citep{zhao2020dataset,zhao2021dataseta,nguyen2020dataset,nguyen2021dataset,kim2022dataset,cazenavette2022dataset} in this area involves the design of novel objectives to obtain synthetic datasets.
% For example, \cite{zhao2020dataset} consider a gradient matching scheme, where goal is to align the parameter gradients of a pre-specified model trained on the condensed dataset to those of the same model trained on the original.

\textbf{Implicit differentiation} methods apply the implicit function theorem (IFT)~(\cref{eq:ift}) to nested-optimization problems~\citep{wang2019solving}.
\citet{lorraine2020optimizing} approximated the inverse Hessian by Neumann series, which is a stable alternative to conjugate gradients~\citep{shaban2019truncated} and scales IFT to large networks with constant memory.

\textbf{Differentiable NAS} methods, e.g., DARTS~\citep{liu2018darts} explore the possibility of transforming the discrete neural architecture space into a continuously differentiable form and further uses gradient optimization to search the neural architecture. SNAS~\citep{xie2018snas} points out that DARTS suffers from the unbounded bias issue towards its objective, and it remodels the NAS and leverages the Gumbel-softmax trick~\citep{jang2016categorical,maddison2016concrete} to learn the %exact
architecture parameter.

In addition, we summarize  more dataset condensation and coreset selection methods as well as graph reduction methods in~\cref{apd:extend-related}.

\begin{table*}[!htbp]
\adjustbox{max width=\textwidth}{%
{\renewcommand{\arraystretch}{1.5}%
{\Large
\begin{tabular}{cccccccccccc}
\toprule
                            & \multicolumn{1}{c}{} & \multicolumn{3}{c}{Coresets}                                                              & \multicolumn{5}{c}{Standard Condensation}                                                                                                                                                                            & Ours          & Oracle  \\ 
\multirow{-2}{*}{Dataset}   &                       & \multicolumn{1}{c}{Random}         & \multicolumn{1}{c}{K-Center}       & Herding        & \multicolumn{1}{c}{DC}             & \multicolumn{1}{c}{DSA}            & \multicolumn{1}{c}{DM}             & \multicolumn{1}{c}{KIP}            & TM                                                            & HCDC          & Optimal \\ \midrule
                            & Corr. & $-0.12\pm0.07$ & $0.19\pm0.12$  & $-0.05\pm0.08$ & $-0.21\pm0.15$ & $-0.33\pm0.09$ & $-0.10\pm0.15$ & $-0.27\pm0.15$ & \cellcolor[HTML]{FFFFFF}{\color[HTML]{333333} $-0.07\pm0.04$} & $\bm{0.74\pm0.21}$ & ---     \\ \cmidrule(l){2-12}
\multirow{-2}{*}{CIFAR-10}  & Perf. (\%) & $91.3\pm0.2$   & $91.4\pm0.3$   & $90.2\pm0.9$   & $89.2\pm3.3$   & $73.5\pm7.2$   & $92.2\pm0.4$   & $91.8\pm0.2$   & $75.2\pm4.3$                                                  & $\bm{92.9\pm0.7}$  & $93.5$    \\ \midrule
                            & Corr. & $-0.05\pm0.03$ & $-0.07\pm0.05$ & $0.08\pm0.11$  & $-0.13\pm0.02$ & $-0.28\pm0.05$ & $-0.15\pm0.07$ & $-0.08\pm0.04$ & \cellcolor[HTML]{FFFFFF}{\color[HTML]{333333} $-0.09\pm0.03$} & $\bm{0.63\pm0.13}$ & ---     \\ \cmidrule(l){2-12}
\multirow{-2}{*}{CIFAR-100} & Perf. (\%) & $71.1\pm1.4$   & $69.5\pm2.8$   & $67.9\pm1.8$   & $64.9\pm2.2$   & $59.0\pm4.1$   & $70.1\pm0.6$   & $68.8\pm0.6$   & $51.3\pm6.1$                                                  & $\bm{72.4\pm1.7}$  & $72.9$    \\ \bottomrule
\end{tabular}
}}}
\caption{\label{tab:images}The Spearman’s rank correlation of architecture's performance (Corr.) and the test performance of the best architecture selected on the condensed dataset (Perf.) on two \textbf{image datasets}. Grid search is applied to find the best architecture.}
\end{table*}
\begin{table*}[!htbp]
\adjustbox{max width=0.98\linewidth}{%
{\renewcommand{\arraystretch}{0.8}%
\begin{tabular}{ccccccccccc} \toprule
    \multirow{2}{*}{Dataset}    & Ratio             & \multicolumn{2}{c}{Random}  & \multicolumn{2}{c}{GCond-X}      & \multicolumn{2}{c}{GCond}        & \multicolumn{2}{c}{\ouralgo}                   & Whole Graph                   \\ 
                                 & ($c_{\mathrm{train}}/n$)   & Corr.         & Perf. (\%)   & Corr.         & Perf. (\%)        & Corr.         & Perf. (\%)        & Corr.                  & Perf. (\%)             & Perf. (\%)                    \\ \midrule
    \multirow{3}{*}{Cora}       & 0.9\%             & $0.29\pm.08$  & $81.2\pm1.1$ & $0.16\pm.07$  & $79.5\pm0.7$      & $0.61\pm.03$  & $81.9\pm1.6$      & $\mathbf{0.80\pm.03}$  & $\mathbf{83.0\pm0.2}$  & \multirow{3}{*}{$83.8\pm0.4$} \\
                                & 1.8\%             & $0.40\pm.04$  & $81.9\pm0.5$ & $0.21\pm.07$  & $80.3\pm0.4$      & $0.76\pm.06$  & $83.2\pm0.9$      & $\mathbf{0.85\pm.03}$  & $\mathbf{83.4\pm0.2}$  &                               \\
                                & 3.6\%             & $0.51\pm.04$  & $82.2\pm0.6$ & $0.23\pm.04$  & $80.9\pm0.6$      & $0.81\pm.04$  & $83.2\pm1.1$      & $\mathbf{0.90\pm.01}$  & $\mathbf{83.4\pm0.3}$  &                               \\ \midrule
    \multirow{3}{*}{Citeseer}   & 1.3\%             & $0.38\pm.11$  & $71.9\pm0.8$ & $0.15\pm.07$  & $70.7\pm0.9$      & $0.68\pm.03$  & $71.3\pm1.2$      & $\mathbf{0.79\pm.01}$  & $\mathbf{73.1\pm0.2}$  & \multirow{3}{*}{$73.7\pm0.6$} \\
                                & 2.6\%             & $0.56\pm.06$  & $72.2\pm0.4$ & $0.29\pm.05$  & $70.8\pm0.5$      & $0.79\pm.05$  & $71.5\pm0.7$      & $\mathbf{0.83\pm.02}$  & $\mathbf{73.3\pm0.5}$  &                               \\
                                & 5.2\%             & $0.71\pm.05$  & $73.0\pm0.3$ & $0.35\pm.08$  & $70.2\pm0.4$      & $0.83\pm.03$  & $71.1\pm0.8$      & $\mathbf{0.89\pm.02}$  & $\mathbf{73.4\pm0.4}$  &                               \\ \midrule
    \multirow{3}{*}{Ogbn-arxiv} & 0.1\%             & $0.59\pm.08$  & $70.1\pm1.7$ & $0.39\pm.06$  & $69.8\pm1.4$      & $0.59\pm.07$  & $70.3\pm1.4$      & $\mathbf{0.77\pm.04}$  & $\mathbf{71.9\pm0.8}$  & \multirow{3}{*}{$73.2\pm0.8$} \\
                                & 0.25\%            & $0.63\pm.05$  & $70.3\pm1.3$ & $0.44\pm.03$  & $70.1\pm0.7$      & $0.64\pm.05$  & $70.5\pm1.0$      & $\mathbf{0.83\pm.03}$  & $\mathbf{72.4\pm1.0}$  &                               \\
                                & 0.5\%             & $0.68\pm.07$  & $70.9\pm1.0$ & $0.47\pm.05$  & $70.0\pm0.7$      & $0.67\pm.05$  & $71.1\pm0.6$      & $\mathbf{0.88\pm.03}$  & $\mathbf{72.6\pm0.6}$  &                               \\ \midrule
    \multirow{3}{*}{Reddit}     & 0.1\%             & $0.42\pm.09$  & $92.1\pm1.6$ & $0.39\pm.04$  & $90.9\pm0.8$      & $0.53\pm.06$  & $90.9\pm1.7$      & $\mathbf{0.79\pm.03}$  & $\mathbf{92.1\pm0.9}$  & \multirow{3}{*}{$94.1\pm0.7$} \\
                                & 0.25\%            & $0.50\pm.06$  & $92.7\pm1.3$ & $0.41\pm.05$  & $90.9\pm0.5$      & $0.61\pm.04$  & $91.2\pm1.2$      & $\mathbf{0.83\pm.01}$  & $\mathbf{92.9\pm0.7}$  &                               \\
                                & 0.5\%             & $0.58\pm.06$  & $92.8\pm0.7$ & $0.42\pm.03$  & $91.5\pm0.6$      & $0.66\pm.02$  & $92.1\pm0.9$      & $\mathbf{0.87\pm.01}$  & $\mathbf{93.1\pm0.5}$  &                               \\ \bottomrule               
\end{tabular}
}}
\caption{\label{tab:graphs}Spearman's rank correlation of convolution filters in GNNs (Corr.) and the test performance of the best convolution filter selected on the condensed graph (Pref.) on four \textbf{graph datasets}. Continuous hyperparameter optimization~\citep{lorraine2020optimizing} is applied to find the best convolution filter, while Spearman's rank correlation coefficients are evaluated on $80$ sampled hyperparameter configurations. $n$ is the total number of nodes in the original graph, and $c_{\mathrm{train}}$ is the number of training nodes in the condensed graph.}
\end{table*}
\begin{table*}[t]
\adjustbox{max width=\textwidth}{%
{\renewcommand{\arraystretch}{1.2}%
{\scriptsize
\begin{tabular}{ccccccccc}
\toprule
\multirow{2}{*}{NAS Algorithm} & \multicolumn{2}{c}{Random} & \multicolumn{2}{c}{DC}     & \multicolumn{2}{c}{HCDC}   & \multicolumn{2}{c}{Original} \\ 
                        & Time (sec)   & Perf. (\%)       & Time (sec)   & Perf. (\%)        & Time (sec)   & Perf. (\%)       & Time (sec)    & Perf. (\%)         \\ \midrule
DARTS-PT                & $37.1$ & $89.4\pm0.3$ & $39.2$ & $85.2\pm1.9$ & $35.5$ & $91.9\pm0.4$ & $229$   & $92.7\pm0.6$  \\
REINFORCE               & $166$   & $88.1\pm1.8$ & $105$   & $80.1\pm6.5$ & $119$   & $92.3\pm1.1$ & $1492$  & $93.0\pm0.9$ \\ \bottomrule
\end{tabular}
}}}
\caption{\label{tab:nas}The search time and test performance of the best architecture find by NAS methods on the condensed datasets. We consider two NAS algorithms: (1) the differentiable NAS algorithm DARTS-PT and (2) REINFORCE without parameter-sharing.}
\end{table*}

\section{Experiments}
\label{sec:experiments}
In this section, we validate the effectiveness of hyperparameter calibrated dataset condensation (\ouralgo) when applied to speed up architecture/hyperparameter search on two types of data: \textbf{images} and \textbf{graphs}.
For an ordered list of architectures, we calculate Spearman's rank correlation coefficient $-1\leq \text{Corr.} \leq1$, between the rankings of their validation performance on the original and condensed datasets.
This correlation coefficient (denoted by \textbf{Corr.}) indicates how similar the performance ranking on the condensed dataset is to that on the original dataset.
We also report the test accuracy (referred to as \textbf{Perf.}) evaluated on the original dataset of the architectures selected on the condensed dataset.
If the test performance is close to the true optimal performance among all architectures, we say the architecture search outcome is preserved with high accuracy.
See~\cref{apd:implementations} and~\cref{apd:implement-detials} for more discussions on implementation and experimental setups.

\begin{figure}[!htbp]
    \includegraphics[width=0.65\columnwidth]
    {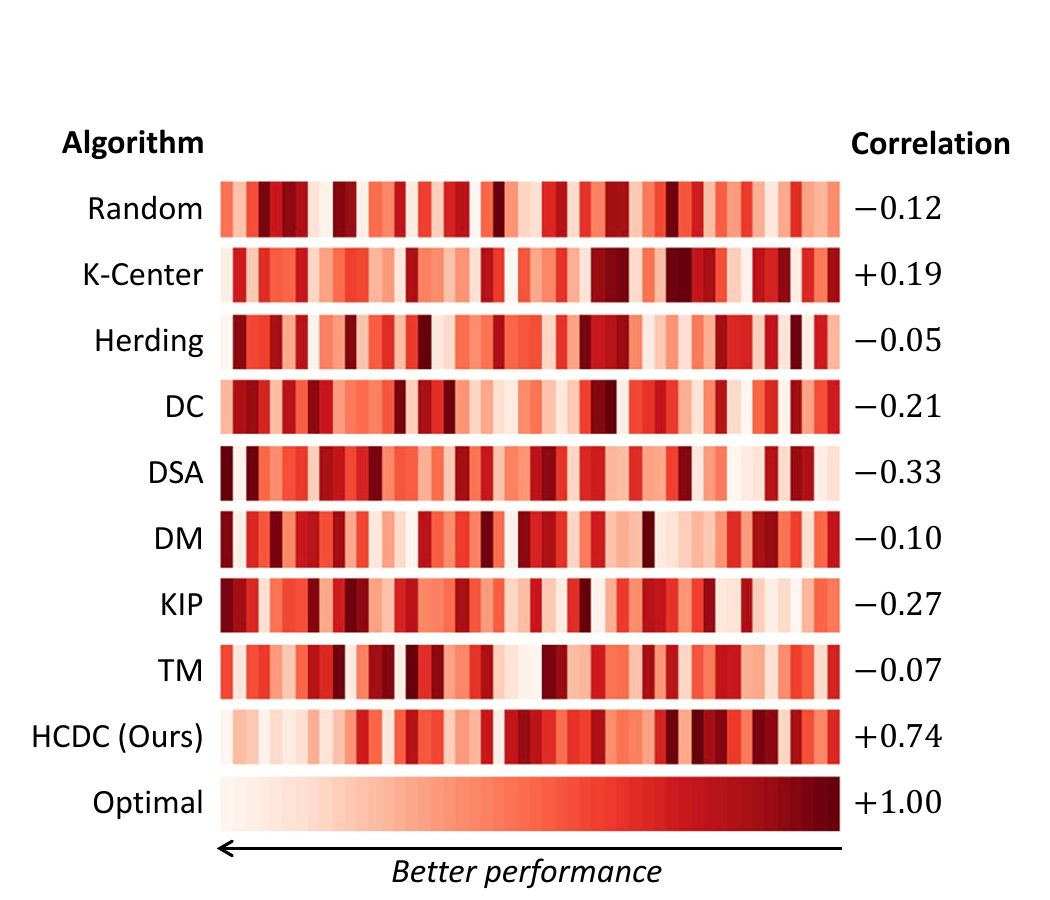}
    \caption{\label{fig:ranks}Visualization of the performance rankings of architectures (subsampled from the search space) evaluated on different condensed datasets. Colors indicate the performance ranking on the original dataset, while lighter shades refer to better performance. Spearman's rank correlations are shown on the right.}
\end{figure}

\para{Preserving architecture performance ranking on images.}
We follow the practice of~\citep{cui2022dc} and construct the search space by sampling 100 networks from NAS-Bench-201~\citep{dong2020bench}, which contains the ground-truth performance of 15,625 networks.
All models are trained on CIFAR-10 or CIFAR-100 for 50 epochs under five random seeds and ranked according to their average accuracy on a held-out validation set of $10$K images.
As a common practice in NAS~\citep{liu2018darts}, we reduce the number of repeated blocks in all architecture from 15 to 3 during the search phase, as deep models are hard to train on the small condensed datasets.
We consider three coreset baselines, including uniform random sampling, K-Center~\citep{farahani2009facility}, and Herding~\citep{welling2009herding} coresets, as well as five standard condensation baselines, including dataset condensation (DC)~\citep{zhao2020dataset}, differentiable siamese augmentation (DSA)~\citep{zhao2021dataseta}, distribution matching (DM)~\citep{zhao2021datasetb}, Kernel Inducing Point (KIP)~\citep{nguyen2020dataset,nguyen2021dataset}, and Training Trajectory Matching (TM)~\citep{cazenavette2022dataset}.
For the coreset and condensation baselines, we randomly split the condensed dataset to obtain the condensed validation data while keeping the train-validation split ratio.
We subsample or compress the original dataset to 50 images per class for all baselines.
As shown in~\cref{tab:images},  our~\ouralgo is much better at preserving the performance ranking of architectures compared to all other coreset and condensation methods.
At the same time, \ouralgo also consistently attains better-selected architectures' performance, which implies the \ouralgo condensed datasets are reliable proxies of the original datasets for architecture search.

% Can we make the ranks figure a wrapfigure
% and the other two side by side

% \begin{figure}[!htbp]
% \begin{floatrow}
% \ffigbox[0.6\textwidth]{%
%     \includegraphics[width=\columnwidth, trim={20pt 0 20pt 30pt}, clip]{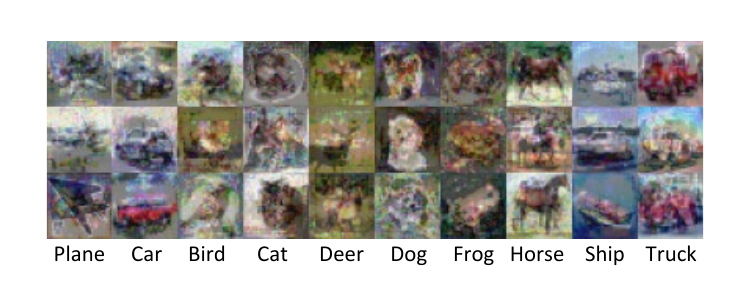}
% }{%
%     \caption{\label{fig:ranks}Visualization of the performance rankings of architectures (subsampled from the search space) evaluated on different condensed datasets. Colors indicate the performance ranking on the original dataset, while lighter shades refer to better performance. Spearman's rank correlations are shown on the right.}
% }
% \ffigbox[0.4\textwidth]{%
%     \includegraphics[width=\columnwidth,trim={5pt 0pt 0pt 5pt},clip]{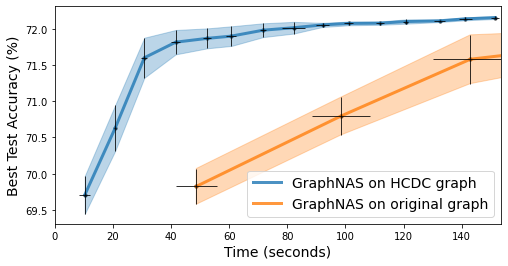}
% }{%
%     \caption{\label{fig:visualization}Speed-up of graph NAS's search process, when evaluated on the small proxy dataset condensed by \ouralgo.}
% }
% \end{floatrow}
% \end{figure}

\para{Speeding up architecture search on images.}
We then combine \ouralgo with some off-the-shelf NAS algorithms to demonstrate the efficiency gain when evaluated on the proxy condensed dataset.
We consider two NAS algorithms: DARTS-PT~\citep{wang2020rethinking}, which is a parameter-sharing based differentiable NAS algorithm, and REINFORCE~\citep{williams1992simple}, which is a reinforcement learning algorithm without parameter sharing.
In~\cref{tab:nas}, we see all coreset/condensation baselines can bring significant speed-ups to the NAS algorithms since the models are trained on the small proxy datasets.
Same as under the grid search setup, the test performance of the selected architecture on the~\ouralgo condensed dataset is consistently higher.
Here a small search space of 100 sampled architectures is used as in~\cref{tab:nas} and we expect even higher efficiency gain on larger search spaces.

\begin{figure}[!htbp]
 \includegraphics[width=0.95\columnwidth]{}
  \caption{\label{fig:visualization}Visualization of some example condensed validation set images using our HCDC algorithm on CIFAR-10.}
\end{figure}

In~\cref{fig:ranks}, we directly visualize the performance rankings of architectures on different condensed datasets. Each color slice indicates one architecture and and are re-ordered with the ranking from the condensation algorithm. rows that are more similar to the 'optimal' gradient indicate that the algorithm is ranking the architectures similarly to the optimial ranking. The best architectures with the HCDC algorithm are among the best in the original dataset.

\para{Finding the best convolution filter on graphs.}
We now consider the search space of graph neural networks' (GNN) convolution filters, which is intrinsically continuous (i.e., defined by a few continuous hyperparameters which parameterize the convolution filter, see~\cref{sec:algorithm} for details).
Our goal is to speed up the selection of the best-suited convolution filter design on large graphs.
We consider 2-layer message-passing GNNs whose convolution matrix is a truncated sum of powers of the graph Laplacian; see~\cref{apd:implement-detials}.
Four node classification graph benchmarks are used, including two small graphs (Cora and Citeseer) and two large graphs (Ogbn-arxiv and Reddit) with more than $100$K nodes.
To compute the Spearman's rank correlations, we sample $80$ hyperparameter setups from the search space and compare their performance rankings.
We test \ouralgo against three baselines: (1) Random: random uniform sampling of nodes and find their induced subgraph, (2) GCond-X: graph condensation~\citep{jin2021graph} but fix the synthetic adjacency to the identity matrix, (3) GCond: graph condensation which also learns the adjacency. 
The whole graph performance is oracle and shows the best possible test performance when the convolution filter is optimized on the original datasets using hypergradient-based method~\citep{lorraine2020optimizing}.
In \cref{tab:graphs}, we see~\ouralgo consistently outperforms the other approaches, and the test performance of selected architecture is close to the ground-truth optimal.

\begin{figure}[!htbp]
    \includegraphics[width=0.55\columnwidth,trim={5pt 0pt 0pt 5pt},clip]{}
    \caption{\label{fig:graph-nas}Speed-up of graph NAS's search process, when evaluated on the small proxy dataset condensed by \ouralgo.}
\end{figure}

\textbf{Speeding up off-the-shelf graph architecture search algorithms.}
Finally, we demonstrate \ouralgo can speed up off-the-shelf graph architecture search methods.
We use graph NAS~\citep{gao2019graphnas} on Ogbn-arxiv with a compression ratio of $c_{\mathrm{train}}/n=0.5\%$, where $n$ is the size of the original graph, and $c_{\mathrm{train}}$ is the number of training nodes in the condensed graph.
The search space of GNN architectures is the same as in~\citep{gao2019graphnas}, where various attention and aggregation functions are incorporated.
We plot the best test performance of searched architecture versus the search time in~\cref{fig:graph-nas}.
We see that when evaluated on the dataset condensed by \ouralgo, the search algorithm finds the better architectures much faster.
This efficiency gain provided by the small proxy dataset is orthogonal to the design of search strategies and should be applied to any type of data, including graph and images.

\section{Conclusion}
\label{sec:conclusion}

%This paper considers a novel objective for dataset condensation: preserving the outcome of hyperparameter search/optimization.
We propose a hyperparameter calibration formulation for dataset condensation to preserve the outcome of hyperparameter optimization, 
which is then solved by aligning the hyperparameter gradients.
We demonstrate both theoretically and experimentally that \ouralgo can effectively preserve the validation performance rankings of architectures and accelerate the hyperparameter/architecture search on images and graphs.
The overall performance of \ouralgo can be affected by
\begin{enumerate*}[label=(\arabic*)]
    \item how the differentiable NAS model used for condensation generalizes to unseen architectures,
    \item where we align hypergradients in the search space,
    \item how we learn the synthetic training set,
    \item how we parameterize the synthetic dataset,
\end{enumerate*}
and we leave the exploration of these design choices to future work.
We hope our work opens up a promising avenue for speeding up hyperparameter/architecture search by dataset compression.

\section*{Acknowledgement}
Ding, Xu, Rabbani, Liu, and Huang are supported by DARPA Transfer from Imprecise and Abstract Models to Autonomous Technologies (TIAMAT) 80321, National Science Foundation NSF-IIS-2147276 FAI, DOD-ONR-Office of Naval Research under award number N00014-22-1-2335, DOD-AFOSR-Air Force Office of Scientific Research under award number FA9550-23-1-0048, DOD-DARPA-Defense Advanced Research Projects Agency Guaranteeing AI Robustness against Deception (GARD) HR00112020007, Adobe, Capital One and JP Morgan faculty fellowships.

\clearpage
\newpage

\appendix

\section{A Detailed Diagram of the Proposed HCDC}\label{apd:diagram}

\begin{figure}[!htbp]
\vspace{-1em}
\centering
\includegraphics[width=\textwidth]{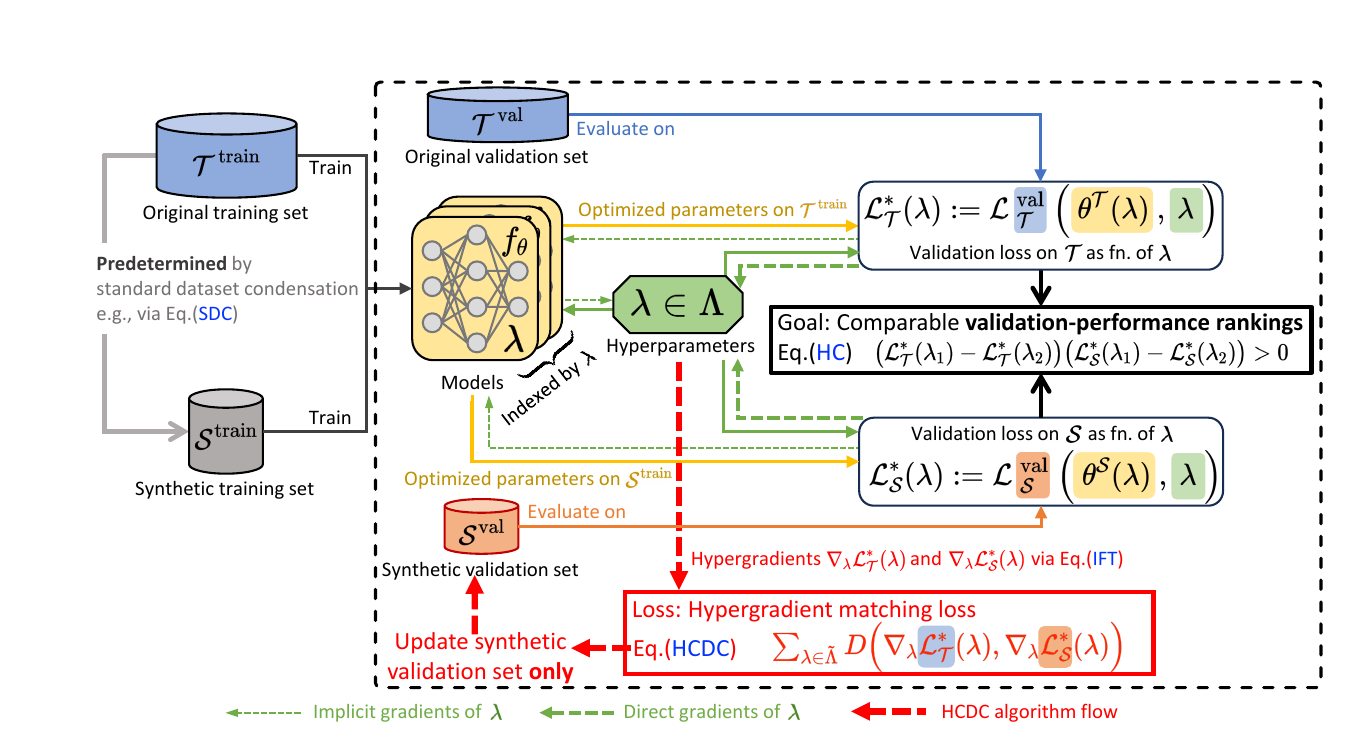}
\caption{Hyperparameter Calibrated Dataset Condensation (HCDC) aims to find a small \textcolor{orange}{validation} \textcolor{orange}{dataset} such that the validation-performance rankings of the models with different hyperparameters are comparable to the large original dataset's \big(\cref{eq:hc} in the ``{Goal}'' box\big). Our method realizes this goal by learning the synthetic validation set to match the hypergradients w.r.t the hyperparameters \big(\cref{eq:hcdc} in the \textcolor{red}{``{Loss}'' box}\big).   
\textbf{Our contribution is depicted within the big black dashed box.} The algorithm flow is indicated through the \textcolor{red}{red dashed arrows}. Solid arrows (blue, yellow and green) indicate forward passes.
To calculate the hypergradients with respect to hyperparameters $\lambda$, we backpropagate to compute both \textcolor{YellowGreen}{implicit/direct gradients} (\textcolor{YellowGreen}{thin/thick green dashed arrows}). 
The synthetic training set is predetermined by any standard dataset condensation (SDC) methods (e.g.,~\cref{eq:sdc-bl}).
The synthetic training and validation datasets obtained can later be used for hyperparameter search using only a fraction of the original computational load.
}
\label{fig:schematic-appendix}
\end{figure}

\section{More Related Work}
\label{apd:extend-related}

This section contains extensive discussions of many related areas, which cannot be fitted into the main paper due to the page limit.

\subsection{Dataset Condensation and Coreset Selection}

Firstly, we review the two main approaches to reducing the training set size while preserving model performance.

\textbf{Dataset condensation} (or distillation) is first proposed in~\citep{wang2018dataset} as a learning-to-learn problem by formulating the network parameters as a function of synthetic data and learning them through the network parameters to minimize the training loss over the original data. However, the nested-loop optimization precludes it from scaling up to large-scale in-the-wild datasets. \citet{zhao2020dataset} alleviate this issue by enforcing the gradients of the synthetic samples w.r.t. the network weights to approach those of the original data, which successfully alleviates the expensive unrolling of the computational graph. Based on the meta-learning formulation in~\citep{wang2018dataset}, \citet{bohdal2020flexible} and \citet{nguyen2020dataset,nguyen2021dataset} propose to simplify the inner-loop optimization of a classification model by training with ridge regression which has a closed-form solution, while \citet{such2020generative} model the synthetic data using a generative network. To improve the data efficiency of synthetic samples in gradient-matching algorithm, \citet{zhao2021dataseta} apply differentiable Siamese augmentation, and~\citet{kim2022dataset} introduce efficient synthetic-data parametrization. Recently, a new distribution-matching framework~\citep{zhao2021datasetb} proposes to match the hidden features rather than the gradients for fast optimization but may suffer from performance degradation compared to gradient-matching~\citep{zhao2021datasetb}, where~\citet{kim2022dataset} provide some interpretation.

\textbf{Coreset selection} methods choose samples that are important for training based on heuristic criteria, for example, minimizing the distance between coreset and whole-dataset centers~\citep{chen2010super,rebuffi2017icarl}, maximizing the diversity of selected samples in the gradient space~\citep{aljundi2019gradient}, discovering cluster centers~\citep{sener2018active}, and choosing samples with the largest negative implicit gradient~\citep{borsos2020coresets}. \textit{Forgetting}~\citep{toneva2018empirical} measures the forgetfulness of trained samples and drops those that are not easy to forget. \textit{GraNd}~\citep{paul2021deep} selects the training samples that contribute most to the training loss in the first few epochs. \textit{Prism}~\citep{kothawade2022prism} select samples to maximize submodular set-functions, which are combinatorial generalizations of entropy measures~\citep{iyer2021submodular}. Recent benchmark~\citep{guo2022deepcore} of a variety of coreset selection methods for image classification indicates \textit{Forgetting}, \textit{GraNd}, and \textit{Prism} are among the best-performing corset methods but still evidently underperform the dataset condensation baselines. Although coreset selection can be very efficient, most of the methods above suffer from three major limitations:
\begin{enumerate*}[label=(\arabic*)]
    \item their performance is upper-bounded by the information in the selected samples;
    \item most of them do not directly optimize the synthetic samples to preserve the model performance; and 
    \item most of them select samples incrementally and greedily, which is short-sighted.
\end{enumerate*}

\subsection{Implicit Differentiation and Differentiable NAS}

Secondly, we list two relevant areas to this work: implicit differentiation methods based on the implicit function theorem (IFT), and the differentiable neural architecture search (NAS) algorithms.

\textbf{Implicit differentiation} methods apply the implicit function theorem (IFT) to the nested-optimization problems~\citep{ochs2015bilevel,wang2019solving}. The IFT requires inverting the training Hessian with respect to the network weights, where early work either computes the inverse explicitly~\citep{bengio2000gradient,larsen1996design} or approximates it as the identity matrix~\citep{luketina2016scalable}. Conjugate gradient (CG) is applied to invert the Hessian approximately~\citep{pedregosa2016hyperparameter}, but is difficult to scale to deep networks. Several methods have been proposed to efficiently approximate Hessian inverse, for example, 1-step unrolled differentiation~\citep{luketina2016scalable}, Fisher information matrix~\citep{larsen1996design}, NN-structure aided Kronecker-factored inversion~\citep{martens2015optimizing}. \citet{lorraine2020optimizing} use the Neumann inverse approximation, which is a stable alternative to CG~\citep{shaban2019truncated} and successfully scale gradient-based bilevel-optimization to large networks with constant memory constraint. It is shown that unrolling differentiation around locally optimal weights for $i$ steps is equivalent to approximating the Neumann series inverse approximation up to the first $i$ terms.

\textbf{Differentiable NAS} methods, e.g., DARTS~\citep{liu2018darts} explores the possibility of transforming the discrete neural architecture space into a continuously differentiable form and further uses gradient optimization to search the neural architecture. DARTS follows a cell-based search space~\citep{zoph2018learning} and continuously relaxes the original discrete search strategy. Despite its simplicity, several works cast double on the effectiveness of DARTS~\citep{li2020random,zela2019understanding}. SNAS~\citep{xie2018snas} points out that DARTS suffers from the unbounded bias issue towards its objective, and it remodels the NAS and leverages the Gumbel-softmax trick~\citep{jang2016categorical,maddison2016concrete} to learn the exact architecture parameter. Differentiable NAS techniques have also been applied to graphs to design data-specific GNN architectures~\citep{wang2021autogel,huan2021search} automatically.

\subsection{Graph Reduction}

Lastly, when we apply \ouralgo to graph data, it relates to the graph reduction method for graph neural network training, which we summarize as follows.

\textbf{Graph coreset selection} is a non-trivial generalization of the above method coreset methods given the non-\textit{iid} nature of graph nodes and the non-linearity nature of GNNs. The very few off-the-shelf graph coreset algorithms are designed for graph clustering~\citep{baker2020coresets,braverman2021coresets} and are not optimal for the training of GNNs.

\textbf{Graph condensation}~\citep{jin2021graph} achieves the state-of-the-art performance for preserving GNNs' performance on the simplified graph. However, \citet{jin2021graph}~only adapt the gradient-matching algorithm of dataset condensation~\citet{zhao2020dataset} to graph data, together with an MLP-based generative model for edges~\citep{anand2018generative,simonovsky2018graphvae}, leaving out several major issues on efficiency, performance, and generalizability. Subsequent work aims to apply the more efficient distribution-matching algorithm~\citep{zhao2021datasetb,wang2022cafe} of dataset condensation to graph~\citep{liu2022graph} or speed up gradient-matching graph condensation by reducing the number of gradient-matching-steps~\citep{jin2022condensing}. While the efficiency issue of graph condensation is mitigated~\citep{jin2022condensing}, the performance degradation on medium- and large-sized graphs still renders graph condensation practically meaningless. Our \ouralgo is specifically designed for repeated training in architecture search, which is, in contrast, well-motivated.

\textbf{Graph sampling} methods~\citep{chiang2019cluster,zeng2019graphsaint} can be as simple as uniformly sampling a set of nodes and finding their induced subgraph, which is understood as a graph-counterpart of uniform sampling of \textit{iid} samples. However, most of the present graph sampling algorithms (e.g., ClusterGCN~\citep{chiang2019cluster} and GraphSAINT~\citep{zeng2019graphsaint}) are designed for sampling multiple subgraphs (mini-batches), which form a cover of the original graph for training GNNs with memory constraint. Therefore those graph mini-batch sampling algorithms are effectively graph partitioning algorithms and not optimized to find just one representative subgraph. 

\textbf{Graph sparsification}~\citep{batson2013spectral,satuluri2011local} and \textbf{graph coarsening}~\citep{loukas2018spectrally,loukas2019graph,huang2021scaling,cai2020graph} algorithms are usually designed to preserve specific graph properties like graph spectrum and graph clustering. Such objectives are often not aligned with the optimization of downstream GNNs and are shown to be sub-optimal in preserving the information to train GNNs well~\citep{jin2021graph}.

\section{Preliminaries on Graph Hyperparameter Search}
\label{apd:prelim}

\ouralgo applies to not only image data but also graph data and graph neural networks (GNN). In~\cref{sec:experiments}, we use \ouralgo to search for the best convolution filter in message-passing GNNs. In this section, we give the backgrounds of the graph-related experiments. 

\subsection{Node Classification and GNNs}
Node classification on a graph considers a graph $\mathcal{T}=(A,X,\mathbf{y})$ with adjacency matrix $A\in\{0,1\}^{n\times n}$, node features $X\in\mathbb{R}^{n\times d}$, node class labels $\mathbf{y}$, and mutually disjoint node-splits $V_{\mathrm{train}}\bigcup V_{\mathrm{val}}\bigcup V_{\mathrm{test}}=[n]$.
Using a \emph{graph neural network} (GNN) $f_{\theta,\lambda}: \mathbb{R}_{\geq0}^{n\times n}\times \mathbb{R}^{n\times d}\to \mathbb{R}^{n\times K}$, where $\theta\in\Theta$ denotes the parameters and $\lambda\in\Lambda$ denotes the hyper-parameters (if they exist), we aim to find $\theta^\mathcal{T}=\arg\min_{\theta}\mathcal{L}^{\mathrm{{train}}}_{\mathcal{T}}(\theta,\lambda)$, where $\mathcal{L}^{\mathrm{train}}_{\mathcal{T}}(\theta,\lambda)\coloneqq\sum_{i\in V_{\mathrm{train}}}\ell([f_{\theta,\lambda}(A,X)]_i,y_i)$ and $\ell(\hat{y},y)$ is the cross-entropy loss.
The node classification loss $\mathcal{L}^{\mathrm{train}}_{\mathcal{T}}(\theta,\lambda)$ is under the \emph{transductive} setting, which can be easily generalized to the \emph{inductive} setting by assuming only $\{A_{ij}\mid i,j\in V_{\mathrm{train}}\}$ and $\{X_i\mid i\in V_{\mathrm{train}}\}$ are used during training.

\subsection{Additional Downstream Tasks}
\label{apd:prelim:task}

We have defined the downstream task this paper mainly focuses on, node classification on graphs. Where we are given a graph $\mathcal{T}=(A,X,\mathbf{y})$ with adjacency matrix $A\in\{0,1\}^{n\times n}$, node features $X\in\mathbb{R}^{n\times d}$, node class labels $\mathbf{y}\in[K]^n$, and mutually disjoint node-splits $V_{\text{train}}\bigcup V_{\text{val}}\bigcup V_{\text{test}}=[n]$, and the goal is to predict the node labels.

Here, we show that the settings above can also be used to describe per-pixel classification on images (e.g., for semantic segmentation) where CNNs are usually used. For \emph{per-pixel classification}, we are given a set of $\mathfrak{n}$ images of size $w\times h$, so the pixel values of the $j$-th image can be formatted as a tensor $\mathfrak{X}_j\in\mathbb{R}^{w\times h\times c}$ if there are $c$ channels. We are also given the pixel labels $\mathfrak{Y}_j\in[K]^{w\times h}$ for each image $j\in[\mathfrak{n}]$ and the mutually disjoint image-splits $I_{train}\bigcup I_{val}\bigcup I_{test}=[\mathfrak{n}]$. Clearly, we can reshape the pixel values and pixel labels of the $j$-th image to $wh\times c$ and $wh$, respectively, and concatenate those matrices from all images. Following this, denoting $n=\mathfrak{n}wh$, we obtain the concatenated pixel value matrix $X\in\mathbb{R}^{n\times c}$ and the concatenated pixel label vector $\mathbf{y}\in[K]^{n}$. The image-splits are translated into pixel-level splits where $V_{train}=\{i\mid (j-1)wh\leq i\le jwh, j\in I_{trian}\}$ (similar for $V_{val}$ and $V_{test}$) and $V_{train}\bigcup V_{val}\bigcup V_{test}=[n]$. We can also define the auxiliary adjacency matrix $A\in\{0,1\}^{n\times n}$ on the $n=\mathfrak{n}wh$ pixels, where $A$ is block diagonal $A=\diag{A_1,\ldots, A_{\mathfrak{n}}}$ and $A_j\in\{0,1\}^{wh\times wh}$ is the assumed adjacency (e.g., a two-dimensional grid) of the $j$-th image.

\subsection{Graph Neural Network Models}
\label{apd:prelim:nn}

Here we mainly focus on graph neural networks (GNNs) $f_{\theta,\lambda}: \mathbb{R}_{\geq0}^{n\times n}\times \mathbb{R}^{n\times d}\to \mathbb{R}^{n\times K}$, where $\theta\in\Theta$ denotes the parameters, and $\lambda\in\Lambda$ denotes the hyperparameters. In~\cref{sec:preliminaries}, we have seen that most GNNs can be interpreted as iterative convolution/message passing over nodes~\citep{ding2021vq,balcilar2021analyzing} where $X^{(0)}=X$ and $f(A,X)=X^{(L)}$, and for $l\in[L]$, the update-rule is,
\begin{equation}
\label{eq:gnn-single}
    X^{(l+1)}=\sigma\Big(C_{\alpha^{(l)}}(A)X^{(l)}W^{(l)}\Big),
\end{equation}
where $C_{\alpha^{(l)}}(A)$ is the convolution matrix parametrized by $\alpha^{(l)}$, $W^{(l)}$ is the learnable linear weights, and $\sigma(\cdot)$ denotes the non-linearity. Thus the parameters $\theta$ consists of all $\alpha$'s (if they exist) and $W$'s, i.e., $\theta=[\alpha^{(0)},\ldots,\alpha^{(L-1)}, W^{(0)},\ldots, W^{(L-1)}]$.

More specifically, it is possible for GNNs to have more than one convolution filter per layer~\citep{ding2021vq,balcilar2021analyzing} and we may generalize~\cref{eq:gnn-single} to,
\begin{equation}
\label{eq:gnn-multiple}
    X^{(l+1)}=\sigma\left(\sum_{i=1}^{p}C^{(i)}_{\alpha^{(l,i)}}(A)X^{(l)}W^{(l,i)}\right).
\end{equation}
Within this common framework, GNNs differ from each other by choice of convolution filters $\{C^{(i)}\}$, which can be either fixed or learnable. If $C^{(i)}$ is fixed, there is no parameters $\alpha^{(l,i)}$ for any $l\in[L]$. If $C^{(i)}$ is learnable, the convolution matrix relies on the learnable parameters $\alpha^{(l, i)}$ and can be different in each layer (thus should be denoted as $C^{(l, i)}$). Usually, for GNNs, the convolution matrix depends on the parameters in two possible ways: (1) the convolution matrix $C^{(l, i)}$ is scaled by the scalar parameter $\alpha^{(l, i)}\in\mathbb{R}$, i.e., $C^{(l,i)}=\alpha^{(l, i)}\mathfrak{C}^{(i)}$ (e.g., GIN~\citep{xu2018powerful}, ChebNet~\citep{defferrard2016convolutional}, and SIGN~\citep{frasca2020sign}); or (2) the convolution matrix is constructed by node-level self-attentions $[C^{(l,i)}]_{ij}=h_{\alpha^{(l, i)}}\big(X^{(l)}_{i,:}, X^{(l)}_{j,:}\big)[\mathfrak{C}^{(i)}]_{i,j}$ (e.g., GAT~\citep{velivckovic2017graph}, Graph Transformers~\citep{rong2020self,puny2020graph,zhang2020graph}). Based on~\citep{ding2021vq,balcilar2021analyzing}, we summarize the popular GNNs reformulated into the convolution over nodes / message-passing formula (\cref{eq:gnn-multiple}) in~\cref{tab:gnns}.

Convolutional neural networks can also be reformulated into the form of~\cref{eq:gnn-multiple}. For simplicity, we only consider a one-dimensional convolution neural network (1D-CNN), and the generalization to 2D/3D-CNNs is trivial. If we denote the constant cyclic permutation matrix (which corresponds to a unit shift) as $P\in\mathbb{R}^{n\times n}$, the update rule of a 1D-CNN with kernel size $(2K+1), K\geq 0$ can be written as,
\begin{equation}
\label{eq:cnn}
    X^{(l+1)}=\sigma\Big(\sum_{k=-K}^{k=K}\alpha_{k} P^kX^{(l)}W^{(l,k)}\Big).
\end{equation}
We will use this common convolution formula of GNNs (\cref{eq:gnn-multiple}) and 1D-CNNs (\cref{eq:cnn}) in \cref{apd:prelim:linear} and~\cref{thm:sdc-1dcnns}.

\begin{table}[t]
\centering
\caption{\label{tab:gnns}Summary of GNNs formulated as generalized graph convolution.}
\adjustbox{max width=\textwidth}{%
\begin{threeparttable}
\renewcommand*{\arraystretch}{1.5}
\begin{tabular}{Sc Sc Sc Sc Sl} \toprule
Model Name & Design Idea & Conv. Matrix Type & \# of Conv. & Convolution Matrix \\ \cmidrule{1-5}\morecmidrules\cmidrule{1-5}
GCN\tnote{1}~~\citep{kipf2016semi} & Spatial Conv. & Fixed & $1$ & $C=\wt{D}^{-1/2}\wt{A}\wt{D}^{-1/2}$ \\
SAGE-Mean\tnote{2}~~\citep{hamilton2017inductive} & Message Passing & Fixed & $2$ & \multicolumn{1}{l}{$\left\{\begin{tabular}[c]{@{}l@{}} $C^{(1)}=I_n$\\ $C^{(2)}=D^{-1}A$ \end{tabular}\right.\kern-\nulldelimiterspace$} \\
GAT\tnote{3}~~\citep{velivckovic2017graph} & Self-Attention & Learnable & \# of heads & \multicolumn{1}{l}{$\left\{\begin{tabular}[c]{@{}l@{}} $\fC^{(s)}=A+I_n$\itand\\ $h^{(s)}_{\va^{(l,s)}}(X^{(l)}_{i,:}, X^{(l)}_{j,:})=\exp\big(\mathrm{LeakyReLU}($\\ \quad$(X^{(l)}_{i,:}W^{(l,s)}\concat X^{(l)}_{j,:}W^{(l,s)})\cdot\va^{(l,s)})\big)$ \end{tabular}\right.\kern-\nulldelimiterspace$} \\
GIN\tnote{1}~~\citep{xu2018powerful} & WL-Test & \multicolumn{1}{c}{\begin{tabular}[c]{@{}c@{}} Fixed $+$ \\ Learnable \end{tabular}} & $2$ & \multicolumn{1}{l}{$\left\{\begin{tabular}[c]{@{}l@{}} $C^{(1)}=A$\\ $\fC^{(2)}=I_n$\itand$h^{(2)}_{\epsilon^{(l)}}=1+\epsilon^{(l)}$ \end{tabular}\right.\kern-\nulldelimiterspace$} \\ \midrule
SGC\tnote{2}~~\citep{defferrard2016convolutional} & Spectral Conv. & Learnable & order of poly. & \multicolumn{1}{l}{$\left\{\begin{tabular}[c]{@{}l@{}} $\fC^{(1)}=I_n$, $\fC^{(2)}=2L/\lambda_{\max}-I_n$,~\\ $\fC^{(s)}=2\fC^{(2)}\fC^{(s-1)}-\fC^{(s-2)}$\\\itand$h^{(s)}_{\theta^{(s)}}=\theta^{(s)}$ \end{tabular}\right.\kern-\nulldelimiterspace$} \\
ChebNet\tnote{2}~~\citep{defferrard2016convolutional} & Spectral Conv. & Learnable & order of poly. & \multicolumn{1}{l}{$\left\{\begin{tabular}[c]{@{}l@{}} $\fC^{(1)}=I_n$, $\fC^{(2)}=2L/\lambda_{\max}-I_n$,~\\ $\fC^{(s)}=2\fC^{(2)}\fC^{(s-1)}-\fC^{(s-2)}$\\\itand$h^{(s)}_{\theta^{(s)}}=\theta^{(s)}$ \end{tabular}\right.\kern-\nulldelimiterspace$} \\
GDC\tnote{3}~~\citep{klicpera2019diffusion} & Diffusion & Fixed & $1$ & $C=S$ \\
Graph Transformers\tnote{4}~~\citep{rong2020self} & Self-Attention & Learnable & \# of heads & \multicolumn{1}{l}{$\left\{\begin{tabular}[c]{@{}l@{}} $\fC^{(s)}_{i,j}=1$\itand$h^{(s)}_{(W^{(l,s)}_Q, W^{(l,s)}_K)}(X^{(l)}_{i,:}, X^{(l)}_{j,:})$\\$=\exp\big(\frac{1}{\sqrt{d_{k,l}
}}(X^{(l)}_{i,:}W^{(l,s)}_Q)(X^{(l)}_{j,:}W^{(l,s)}_K)\tran\big)$  \end{tabular}\right.\kern-\nulldelimiterspace$} \\ \bottomrule
\end{tabular}
\appto\TPTnoteSettings{\large}
\begin{tablenotes}[para]
    \item[1] Where $\wt{A}=A+I_n$, $\wt{D}=D+I_n$.
    \item[2] $C^{(2)}$ represents mean aggregator. Weight matrix in~\citep{hamilton2017inductive} is $W^{(l)}=W^{(l,1)}\concat W^{(l,2)}$.
    \item[3] Need row-wise normalization. $C^{(l,s)}_{i,j}$ is non-zero if and only if $A_{i,j}=1$, thus GAT follows direct-neighbor aggregation.
    \item[4] The weight matrices of the two convolution supports are the same, $W^{(l,1)}=W^{(l,2)}$.
    \item[5] Where normalized Laplacian $L=I_n-D^{-1/2}AD^{-1/2}$ and $\lambda_{\max}$ is its largest eigenvalue, which can be approximated as $2$ for a large graph.
    \item[6] Where $S$ is the diffusion matrix $S=\sum_{k=0}^\infty\theta_k\bm{T}^k$, for example, decaying weights $\theta_k=e^{-t}\frac{t^k}{k!}$ and transition matrix $\bm{T}=\wt{D}^{-1/2}\wt{A}\wt{D}^{-1/2}$.
    \item[7] Need row-wise normalization. Only describes the global self-attention layer, where $W^{(l,s)}_Q, W^{(l,s)}_Q\in\sR^{f_l, d_{k,l}}$ are weight matrices which compute the queries and keys vectors. In contrast to GAT, all entries of $\fC^{(l,s)}_{i,j}$ are non-zero. Different design of Graph Transformers~\citep{puny2020graph,rong2020self,zhang2020graph} use graph adjacency information in different ways and is not characterized here; see the original papers for details.
\end{tablenotes}
\end{threeparttable}}
\end{table}

\subsection{\ouralgo is Applicable to Various Data, Tasks, and Models}
\label{apd:prelim:other}

In~\cref{apd:prelim:task,apd:prelim:nn}, we have discussed the formal definition of two possible tasks (1) node classification on graphs and (2) per-pixel classification on images, and reformulated many popular GNNs and CNNs into a general convolution form (\cref{eq:gnn-multiple,eq:cnn}). However, we want to note that the application of dataset condensation methods (including the standard dataset condensation~\citep{wang2018dataset,zhao2020dataset,zhao2021datasetb} and our \ouralgo) is not limited by the specific types of data, tasks, and models.

For \ouralgo, we can follow the conventions in~\citep{zhao2020dataset} to define the train/validation losses on \textit{iid} samples and define the notion of dataset condensation as learning a smaller synthetic dataset with less number of samples. Here we leave the readers to~\citep{zhao2020dataset} for formal definitions of condensation on datasets with \textit{iid} samples. 

More generally speaking, our \ouralgo can be applied as long as (1) the train and validation losses, i.e.,  $\mathcal{L}^{train}_{\mathcal{T}}(\theta, \lambda)$ and $\mathcal{L}^{val}_{\mathcal{T}}(\theta, \lambda)$ can be defined (as functions of the parameters and hyperparameters); and (2) we have a well-defined notion of the learnable synthetic dataset $\mathcal{S}$, (e.g., which includes prior-knowledge like what is the format of the synthetic data in $\mathcal{S}$ and how the same model $f_{\theta,\lambda}$ is applied).

\subsection{The Linear Convolution Regression Problem on Graph}
\label{apd:prelim:linear}

For the ease of theoretical analysis, in~\cref{thm:sdc-valid,thm:sdc-1dcnns,thm:sdc-overfit,thm:sdc-gnns} we consider a simplified \emph{linear convolution regression problem} as follows,
\begin{equation}
\label{eq:least-square-linear-conv}
    \theta^\mathcal{T}=\arg\min_{\theta=[\alpha, W]}\|C_{\alpha}(A)\ XW - \mathbf{y}\|^2
\end{equation}
where we are given continuous labels $\mathbf{y}$ and use sum-of-squares loss $\ell(\hat{y},y)=\|\hat{y}-y\|_2^2$ instead of the cross entropy loss used for node/pixel classification. We also assume a linear GNN/CNN $f_{\theta=[\alpha,W]}(A,X)=C_{\alpha}(A)XW$ is used, where $C_{\alpha}(A)$ is the \emph{convolution matrix} which depends on the adjacency matrix $A$ and the parameters $\alpha\in\mathbb{R}^{p}$, 
and $W$ is the learnable linear weights with $d$ elements (hence, the complete parameters consist of two parts, i.e., $\theta=[\alpha, W]$).

As explained in~\cref{apd:prelim:nn}, this \emph{linear convolution} model $f_{\theta=[\alpha,W]}(A,X)=C_{\alpha}(A)XW$ already generalizes a wide variety of GNNs and CNNs. 
For example, it can represent the (single-layer) graph convolution network (GCN)~\citep{kipf2016semi} whose convolution matrix is defined as
$C(A)=\tilde{D}^{-\frac{1}{2}}\tilde{A}\tilde{D}^{-\frac{1}{2}}$ where $\tilde{A}$ and $\tilde{D}$ are the ``self-loop-added'' adjacency and degree matrix (for GCNN there is no learnable parameters in $C_(A)$ so we omit $\alpha$). 
It also generalizes the one-dimensional \emph{convolution neural network} (1D-CNN), where the convolution matrix is $C_{\alpha}(A)=\sum_{k=-K}^{k=K}[\theta]_{k} P^k$ and $P$ is the cyclic permutation matrix correspond to a unit shift.

It is important to note that although we considered this simplified \emph{linear convolution regression problem} in some of our theoretical results, which is both convex and linear. We argue that most of the theoretical phenomena reflected by \cref{thm:sdc-valid,thm:sdc-1dcnns,thm:sdc-overfit,thm:sdc-gnns} can be generalized to the general non-convex losses and non-linear models; see~\cref{apd:proofs:non-convexlinear} for the corresponding discussions.

\section{Standard Dataset Condensation is Problematic Across GNNs}
\label{apd:problem}

In this section, we analyze that standard dataset condensation (SDC) is especially problematic when applied to graphs and GNNs, due to the poor generalizability across GNNs with different convolution filters.

For ease of theoretical discussions, in this subsection, we consider single-layer message-passing GNNs. Message passing GNNs can be interpreted as iterative convolution over nodes (i.e., \emph{message passing})~\citep{ding2021vq} where $X^{(0)}=X$, $X^{(l+1)}=\sigma(C_{\alpha^{(l)}}(A)X^{(l)}W^{(l)})$ for $l\in[L]$, and $f(A,X)=X^{(L)}$, where $C_{\alpha^{(l)}}(A)$ is the convolution matrix parametrized by $\alpha^{(l)}$, $W^{(l)}$ is the learnable linear weights, and $\sigma(\cdot)$ denotes the non-linearity.
One-dimensional convolution neural networks (1D-CNNs) can be expressed by a similar formula, $f(X)=(\sum_{k=-K}^{k=K}\alpha^{(k)} P^k)XW$, parameterized by $\theta=[\alpha,W]$ where $\alpha=[\alpha^{(-K)},\ldots,\alpha^{(K)}]$. $P$ is the cyclic permutation matrix (of a unit shift). The kernel size is $(2K+1), K\geq 0$; see~\cref{apd:prelim:nn} for details.

Despite the success of the gradient matching algorithm in preserving the model performance when trained on the condensed dataset~\citep{wang2022cafe}, it {naturally overfits} the model $f_{\theta,\lambda}$ used during condensation and generalizes poorly to others. 
There is no guarantee that the condensed synthetic data $\mathcal{S}^*$ which minimizes the objective (\cref{eq:sdc-gm}) for a specific model $f_{\theta,\lambda}$ (marked by its hyperparameter $\lambda$) can generalize well to other models $f_{\theta,\lambda'}$ where $\lambda'\neq\lambda$.
We aim to demonstrate that this overfitting issue can be much more \emph{severe on graphs} than on images, where our main theoretical results can be informally summarized as follows.

\begin{proposition-informal}
\label{thm:informal}
Standard dataset condensation using gradient matching algorithm (\cref{eq:sdc-gm}) is problematic across GNNs. The condensed graph using a single-layer message passing GNN may fail to generalize to the other GNNs with a different convolution matrix.
\end{proposition-informal}

We first show the successful generalization of SDC across one-dimensional \emph{convolution neural networks} (1D-CNN). 
Then, we show a contrary result on GNNs: failed generalization of SDC across GNNs. 
These theoretical analyses demonstrate the hardness of data condensation on graphs. 
Our analysis is based on the achievability condition of a gradient matching objective; see~\cref{assum:achievability} in~\cref{apd:problem}.

In \cref{thm:sdc-valid} of~\cref{apd:proofs:sdc-valid}, under least square regression with linear GNN/CNN (see~\cref{apd:prelim:linear} for formal definitions), if the standard dataset condensation GM objective is achievable, then the optimizer on the condensed dataset $\mathcal{S}$ is also optimal on the original dataset $\mathcal{T}$.
Now, we study the generalizability of the condensed dataset across different models. We first show a successful generalization of SDC across different 1D-CNN networks; see~\cref{thm:sdc-1dcnns} in~\cref{apd:problem}.
As long as we use a 1D-CNN with a sufficiently large kernel size $K$ during condensation, we can generalize the condensed dataset to a wide range of models, i.e., 1D-CNNs with a kernel size $K'\leq K$.

However, we obtain a contrary result for GNNs regarding the generalizability of condensed datasets across models.
Two dominant effects, which cause the failure of the condensed graph's ability to generalize across GNNs, are discovered.

Firstly, the learned adjacency $A'$ of the synthetic graph $\mathcal{S}$ can easily \emph{overfit} the condensation objective (see~\cref{thm:sdc-overfit}), and thus can fail to maintain the characteristics of the original structure and distinguish between different architectures; see~\cref{thm:sdc-overfit} in~\cref{apd:problem} for the theoretical result and~\cref{tab:adjacency-overfit} for relevant experiments.

\begin{figure*}[!htbp]
\begin{floatrow}
\capbtabbox[0.39\textwidth]{%
    \begin{subfloatrow}[1]
        \capbtabbox[0.37\textwidth][][t]{%
            \adjustbox{max width=0.30\textwidth}{%
            \centering\small
            \begin{tabular}{ccc} \toprule
            Ratio ($c/n$) & $A'$ learned  & $A'=I_c$                \\ \midrule
            0.05\%        & $59.2\pm 1.1$ & $\mathbf{61.3}\pm 0.5$   \\
            0.25\%        & $63.2\pm 0.3$ & $\mathbf{64.2}\pm 0.4$   \\ \bottomrule
            \end{tabular}
        }}{\caption{Test accuracy of graph condensation with learned or identity adjacency.}}
    \end{subfloatrow}
    \begin{subfloatrow}[1]
        \capbtabbox[0.37\textwidth][][t]{%
            \adjustbox{max width=0.35\textwidth}{%
            \centering\small
            \begin{tabular}{cccc} \toprule
            Condense\textbackslash Test & GCN          & SGC ($K=2$)  & GIN              \\ \midrule
            GCN  & $\textbf{60.3}\pm0.3$ & $59.2\pm0.7$ & $42.2\pm4.3$ \\
            SGC  & $59.2\pm1.1$ & $\textbf{60.5}\pm0.6$ & $39.0\pm7.1$ \\
            GIN  & $47.5\pm3.6$ & $43.6\pm5.8$ & $\textbf{59.1}\pm1.1$ \\ \bottomrule
            \end{tabular}
        }}{\caption{Generalization accuracy of graphs condensed with different GNNs (row) across GNNs (column) under $c/n=0.25\%$.}}
    \end{subfloatrow}
}{%
    \caption{\label{tab:adjacency-overfit}Test accuracy of GNNs trained on condensed Ogbn-arxiv~\citep{hu2020open} graph verifying the two effects (\cref{thm:sdc-overfit,thm:sdc-gnns}) that hinders the generalization of the condensed graph across GNNs. (a) Condensed adjacency is overfitted to the SDC Objective, (b) Convolution filters and inductive bias mismatch across GNNs.}
}
\ffigbox[0.58\textwidth]{%
    \begin{subfloatrow}[2]
        \ffigbox[0.28\textwidth]{%
            \includegraphics[width=\linewidth]{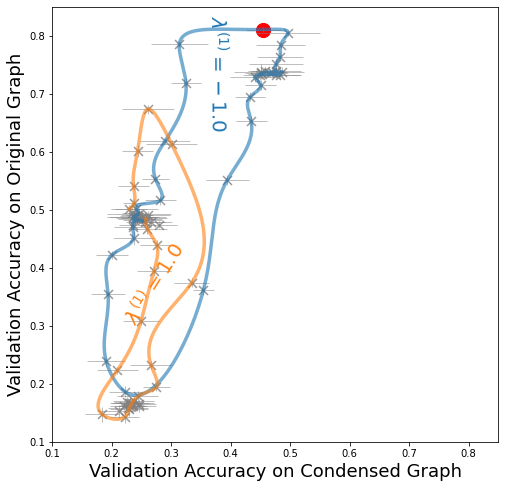}}{\caption{Condense ratio $c/n=0.2$}}
        \ffigbox[0.28\textwidth]{%
            \includegraphics[width=\linewidth]{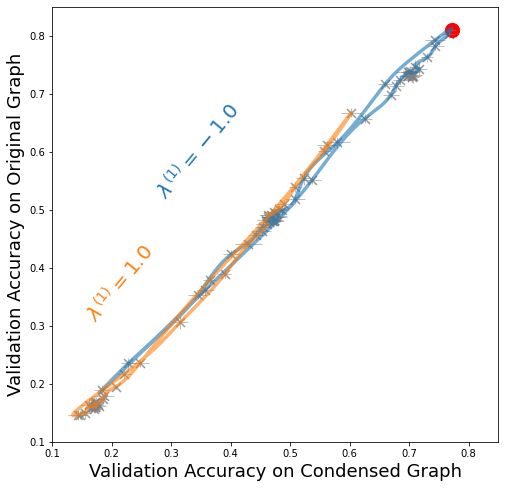}}{\caption{Condense ratio $c/n=0.8$}}
    \end{subfloatrow}
}{%
    \caption{\label{fig:gnn-model-space}The manifold of GNNs with convolution filters $C_{\lambda}\!=\!I\!+\!\lambda^{(1)} L\!+\!\lambda^{(2)}(\frac{2}{\lambda_{\max}}L-I)$ \big(linear combination of first two orders of ChebNet~\citep{defferrard2016convolutional}, $\lambda$'s are hyperparameters; see~\cref{apd:prelim:nn,apd:implement-detials}\big) projected to the plane of validation accuracy on condensed (x-axis) and original (y-axis) graphs under two ratios $c/n$ on Cora~\citep{yang2016revisiting}. The GNN with $C=(\frac{2}{\lambda_{\max}}\!-\!1)L\!\propto\! L$ (red dot) is a biased point in this model space.}
}
\end{floatrow}
\vspace{-8pt}
\end{figure*}

Secondly, GNNs differ from each other mostly on the design of convolution filter $C(A)$, i.e., how the convolution weights $C$ depend on the adjacency information $A$. The convolution filter $C(A)$ used during condensation is a single biased point in ``the space of convolutions''; see~\cref{fig:gnn-model-space} for a visualization, thus there is a \emph{mismatch} of inductive bias when transferring to a different GNN. These two effects lead to the obstacle when transferring the condensed graph across GNNs, which is formally characterized by~\cref{thm:sdc-gnns} in~\cref{apd:problem}.

\cref{thm:sdc-gnns} provides an effective lower bound on the relative estimation error of optimal model parameters when a different convolution filter $C'(\cdot)\neq C(\cdot)$ is used.\footnote{If $C'(\cdot)=C(\cdot)$ \cref{thm:sdc-valid} guarantees $W^{\mathcal{S}}_{C'}=W^{\mathcal{T}}_{C'}$ and the lower bound in~\cref{thm:sdc-gnns} is $0$.} According to the spectral characterization of convolution filters of GNNs (Table 1 of~\citep{balcilar2021analyzing}), we can approximately compute the maximum eigenvalue of $Q$ for some GNNs. For example, if we condense with $f^C$ graph isomorphism network (GIN-0)~\citep{xu2018powerful} but train $f^{C'}$ GCN on the condensed graph, we have $\|W^{\mathcal{S}}_{C'}-W^{\mathcal{T}}_{C'}\|/\|W^{\mathcal{T}}_{C'}\|\gtrapprox \overline{\mathrm{deg}}+1$ where $\overline{\mathrm{deg}}$ is the average node degree of the original graph. This large lower bound hints at the catastrophic failure when transferring across GIN and GCN; see~\cref{tab:adjacency-overfit}.

\begin{assumption}[Achievability of a gradient matching Objective]
\label{assum:achievability}
A gradient matching objective is defined to be achievable if there exists a non-degenerate trajectory $(\theta^{\mathcal{S}}_t)_{t=0}^{T-1}$ (i.e., a trajectory that spans the entire parameter space $\Theta$, i.e., $\mathrm{span}(\theta^{\mathcal{S}}_0,\ldots,\theta^{\mathcal{S}}_{T-1})\supseteq\Theta$), such that 
the gradient matching loss ({the objective of} \cref{eq:sdc-gm} {without expectation}) on this trajectory  is $0$.
\end{assumption}

\begin{proposition}[Successful Generalization of SDC across 1D-CNNs]
\label{thm:sdc-1dcnns}
Consider least-squares regression with one-dimensional linear convolution $f^{2K+1}(X)_{\theta}=(\sum_{k=-K}^{k=K}\alpha^{(k)} P^k)XW$ parameterized by $\theta=[\alpha,W]$ where $\alpha=[\alpha^{(-K)},\ldots,\alpha^{(K)}]$. $P$ is the cyclic permutation matrix (of a unit shift). The kernel size is $(2K+1), K\geq 0$. 
If the gradient matching objective of $f^{2K+1}$ is achievable, then the condensed dataset $\mathcal{S}^*$ achieves the gradient matching objective on any trajectory $\{\theta'^{\mathcal{S}}_t\}_{t=0}^{T-1}$ for any linear convolution $f^{2K'+1}_{\theta'}$ with kernel size $(2K'+1), K\geq K'\geq0$.
\end{proposition}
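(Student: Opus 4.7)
The plan is to exploit the fact that $f^{2K'+1}$ is just $f^{2K+1}$ restricted to the linear subspace of its parameter space where the outer convolution weights vanish, and then to translate achievability of the gradient-matching (GM) objective for $f^{2K+1}$ into an algebraic moment-matching condition on $\mathcal{S}^{*}$ that automatically implies GM for $f^{2K'+1}$ on every trajectory.

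First I would make the embedding $\Theta_{K'}=\mathbb{R}^{2K'+1}\times\mathbb{R}^{d}\hookrightarrow\Theta_{K}=\mathbb{R}^{2K+1}\times\mathbb{R}^{d}$ explicit by zero-padding: $[\alpha',W]\mapsto[\alpha,W]$ with $\alpha^{(k)}=\alpha'^{(k)}$ for $|k|\le K'$ and $\alpha^{(k)}=0$ for $K'<|k|\le K$. Under this embedding the two models coincide pointwise, $f^{2K+1}_{\theta}(X)=f^{2K'+1}_{\theta'}(X)$, so the training losses on $\mathcal{T}$ and $\mathcal{S}$ agree and the partial derivatives with respect to the shared coordinates $(\alpha^{(k)})_{|k|\le K'}$ and $W$ coincide. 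Consequently, GM for $f^{2K+1}$ at the embedded point $\theta$ is exactly GM for $f^{2K'+1}$ at $\theta'$.

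Next I would characterize the GM residual as a polynomial in $\theta$. Writing $C_\alpha=\sum_{k}\alpha^{(k)}P^{k}$ and differentiating $\|C_\alpha XW-y\|^{2}$, one obtains
\begin{equation*}
\nabla_W L_{\mathcal{T}}(\theta)=2\sum_{k,k'}\alpha^{(k)}\alpha^{(k')}\,X^{\top}P^{k'-k}X\,W\;-\;2\sum_{k}\alpha^{(k)}\,X^{\top}P^{-k}y,
\end{equation*}
and an analogous expression for $\nabla_{\alpha^{(k)}}L_{\mathcal{T}}$ that is quadratic in $W$ and linear in $\alpha$. Thus $\nabla_\theta L_{\mathcal{S}}(\theta)-\nabla_\theta L_{\mathcal{T}}(\theta)$ is a polynomial in $(\alpha,W)$ whose coefficients are exactly the shift-moment differences $\Delta M_j:=X_s^{\top}P^{j}X_s-X^{\top}P^{j}X$ for $|j|\le 2K$ and $\Delta v_k:=X_s^{\top}P^{-k}y_s-X^{\top}P^{-k}y$ for $|k|\le K$. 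The crucial step is to argue that vanishing of this polynomial along the non-degenerate spanning trajectory of \cref{assum:achievability} forces every $\Delta M_j$ and every $\Delta v_k$ to vanish; I would isolate monomial coefficients using the factorization that the residual is affine in $W$ and at most quadratic in $\alpha$, combined with the linearly spanning property of the trajectory, peeling off the moments one order at a time.

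Once the moment differences are shown to vanish, the GM residual is identically zero on $\Theta_{K}$, not just along the trajectory. Restricting back to the embedded $\Theta_{K'}$ and invoking the gradient identification from the first step, the GM residual for $f^{2K'+1}$ is identically zero on $\Theta_{K'}$, hence vanishes at every point of every trajectory $\{\theta'^{\mathcal{S}}_{t}\}_{t=0}^{T-1}$, which gives the claim. The main obstacle is promoting ``GM on a linearly spanning trajectory'' to ``all moment differences vanish'': a linearly spanning set is a priori too coarse to pin down a cubic polynomial identity (for instance $xy$ on $\mathbb{R}^{2}$ vanishes at the spanning set $\{e_{1},e_{2}\}$), so the argument must either strengthen the ``non-degenerate'' hypothesis to a mild genericity condition on the trajectory points or exploit the specific bilinear-in-$(\alpha,W)$ factorization of the residual to extract the moments rigorously.
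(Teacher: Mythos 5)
Your overall route is the same as the paper's: both arguments reduce achievability of the GM objective for $f^{2K+1}$ to a family of shift-moment identities ($X'^{\top}P'^{j}X'=X^{\top}P^{j}X$ and $X'^{\top}P'^{-k}\mathbf{y}'=X^{\top}P^{-k}\mathbf{y}$), and then observe that the GM residual of any $f^{2K'+1}$ with $K'\le K$ is a linear combination of a subset of these moment differences, hence vanishes identically on the smaller parameter space and in particular on every trajectory. The paper phrases the last step as ``linearly combining'' the trajectory equations rather than as a zero-padding embedding of $\Theta_{K'}$ into $\Theta_{K}$, but these are the same observation. Your explicit polynomial bookkeeping (residual affine in $W$, quadratic in $\alpha$, coefficients indexed by $|j|\le 2K$ and $|k|\le K$) is a cleaner presentation of what the paper does implicitly by splitting the computation into the $X^{\top}C^{\top}\mathbf{y}$ part and the $X^{\top}C^{\top}CX$ part.

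The step you flag as the main obstacle is a genuine one, and your proposal does not close it: you only state that one ``must either strengthen the hypothesis or exploit the factorization,'' without carrying out either option. To be clear about why this matters, the GM residual is quadratic in $\alpha$ and affine in $W$, so its vanishing on a set of points whose \emph{linear} span is $\Theta_K$ (which is all \cref{assum:achievability} literally guarantees) does not force all monomial coefficients to vanish --- your $xy$ example is exactly the right counterexample, and pinning down the quadratic-in-$\alpha$ coefficients $\Delta M_j$ really requires the rank-one matrices $\alpha_t\alpha_t^{\top}$ to span the symmetric matrices, not merely that the $\alpha_t$ span $\mathbb{R}^{2K+1}$; similarly the affine-in-$W$ part needs an affinely spanning (not just linearly spanning) set of $W_t$'s. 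You should be aware that the paper's own proof passes over this same point: it decomposes the spanning trajectory into $\alpha$-components spanning $\mathbb{R}^{p}$ and $W$-components spanning $\mathbb{R}^{d}$, then asserts that transforming the trajectory equations to the standard basis isolates the individual moments, which is valid for the terms linear in the trajectory variable but is not justified for the cross terms and the quadratic terms (the same issue already appears in the proof of \cref{thm:sdc-valid}, where $(M-M')W_t=v-v'$ on a linearly spanning set is treated as implying $M=M'$ and $v=v'$). So your proposal is faithful to the paper's argument, and the one step you leave open is precisely the step that would need a mild genericity strengthening of \cref{assum:achievability} to be fully rigorous; completing the proof requires actually doing that peeling, not just noting that it should be possible.
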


The intuition behind~\cref{thm:sdc-1dcnns} is that the 1D-CNN of kernel size $(2K+1)$ is a ``supernet'' of the 1D-CNN of kernel size $(2K'+1)$ if $K'\leq K$, and the condensed dataset via a bigger model can generalize well to smaller ones. 
This result suggests us to \emph{use a sufficiently large model during condensation}, to enable the generalization of the condensed dataset to a wider range of models.

\begin{proposition}[Condensed Adjacency Overfits SDC Objective]
\label{thm:sdc-overfit}
Consider least-squares regression with a linear GNN, 
$f(A, X)=C(A)XW$ parameterized by $W$ and $C(A)$ which depends on graph adjacency $A$. 
For any (full-ranked) synthetic node features $X'\in\mathbb{R}^{c\times d}$, there exists a synthetic adjacency matrix $A'\in\mathbb{R}^{c\times c}_{\geq0}$ such that the gradient matching objective is achievable.
\end{proposition}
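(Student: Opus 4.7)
The plan is to reduce achievability of the gradient-matching objective for the linear GNN under squared loss to an algebraic compatibility condition, and then to exhibit a non-negative synthetic adjacency satisfying it by exploiting the large number of degrees of freedom in $A'$ relative to the number of constraints.

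\textbf{Setup and reduction.} For $f_{\theta}(A,X) = C(A) X W$ with squared loss, both training gradients are affine in $W$:
\begin{align*}
\nabla_W \mathcal{L}^{\mathrm{train}}_{\mathcal{T}}(W) &= 2\,(C(A)X)^{\top}\bigl(C(A)XW - y\bigr), \\
\nabla_W \mathcal{L}^{\mathrm{train}}_{\mathcal{S}}(W) &= 2\,(C(A')X')^{\top}\bigl(C(A')X'W - y'\bigr).
\end{align*}
Under~\cref{assum:achievability}, the trajectory $(W_t)_{t=0}^{T-1}$ spans $\Theta = \mathbb{R}^{d}$, so matching two affine functions on a spanning set forces equality of their slopes and intercepts. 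Writing $M \coloneqq C(A)X \in \mathbb{R}^{n\times d}$ and $M' \coloneqq C(A')X' \in \mathbb{R}^{c\times d}$, the problem reduces to finding $A' \ge 0$ and $y'$ with $(M')^{\top} M' = M^{\top} M \eqdef Q$ and $(M')^{\top} y' = M^{\top} y$. Once the Gram-matrix equation holds and $M'$ has full column rank, $(M')^{\top}$ is surjective onto $\mathbb{R}^{d}$, so $y' \in \mathbb{R}^{c}$ can always be solved for; the entire argument therefore collapses to the first equation.

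\textbf{Main construction.} Pick any factorization $Q = B^{\top} B$ with $B \in \mathbb{R}^{c \times d}$ of rank $d$; since $Q$ is PSD of rank at most $d \le c$, the family of such factorizations is parametrized by a Stiefel manifold $B = U Q^{1/2}$ with $U^{\top}U = I_{d}$. Because $X'$ has full column rank, the linear equation $H X' = B$ in $H \in \mathbb{R}^{c \times c}$ admits the affine solution set $\{B (X')^{+} + E : E X' = 0\}$ of dimension $c(c-d)$, where $(X')^{+} \coloneqq ((X')^{\top} X')^{-1} (X')^{\top}$. For the canonical case $C(A) = A$, it suffices to produce $A' \ge 0$ inside this fibre; for the other convolutions appearing in the paper (e.g.\ $C(A) = A + I$ or degree-normalized variants), an analogous fibre is obtained after an affine change of variable in the entries of $A'$.

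\textbf{Main obstacle.} The delicate step, which I expect to be the crux, is enforcing the entry-wise non-negativity constraint on $A'$ while keeping $A' X'$ pinned to $B$. The fibre is a codimension-$cd$ affine subspace of $\mathbb{R}^{c\times c}$, and whether it meets $\mathbb{R}^{c\times c}_{\ge 0}$ is a Gordan-type feasibility question. I plan to handle it row by row: the $i$-th row of $A'$ lies in an affine translate of the $(c-d)$-dimensional left null space of $X'$, and when $c > d$ a kernel direction of sufficient magnitude can be added to dominate any negative entries of the particular solution $B (X')^{+}$. Should this fail for a given factorization, the remaining $cd - d(d+1)/2$ degrees of freedom in the unitary factor $U$ of $B = U Q^{1/2}$ can be used to rotate $B (X')^{+}$ into a more favorable orthant. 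Combining these two sources of slack delivers the desired $A' \in \mathbb{R}^{c\times c}_{\ge 0}$, after which $y'$ is chosen to satisfy the moment equation, completing the proof.
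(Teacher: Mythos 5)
Your reduction is exactly the paper's: matching the affine-in-$W$ gradients on a spanning trajectory forces $(C(A')X')^{\top}(C(A')X') = (C(A)X)^{\top}(C(A)X)$ together with the matching moment equation for $y'$, and the Gram condition is then met by factoring the PSD matrix $Q = VS^{2}V^{\top}$ and setting $C(A')X' = USV^{\top}$ for a semi-unitary $U$; the paper then invokes the full column rank of $X'$ (via its SVD) to solve for the convolution matrix, just as you solve $HX' = B$ using the pseudoinverse. Up to that point the two arguments coincide, and your observation that surjectivity of $(M')^{\top}$ disposes of the $y'$ equation is actually cleaner than the paper's aside that $X^{\top}C^{\top}\mathbf{y}$ ``is a scalar.''

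The divergence is in the non-negativity of $A'$, and here your proposal has a genuine gap --- though, to be fair, the paper's own proof never addresses this constraint at all: it stops once a matrix $C'$ with $C'X' = USV^{\top}$ is exhibited and silently identifies it with a convolution matrix arising from some $A' \ge 0$. Your row-by-row repair does not work as stated: to push the $i$-th row $b_i(X')^{+} + e_i$ into the non-negative orthant by scaling up a kernel direction $e_i$, you need the left null space of $X'$ to contain an entrywise positive vector; but if the all-ones vector $\mathbf{1}_c$ lies in the column span of $X'$ (a completely generic situation, e.g.\ whenever a constant/bias feature is present), every $e_i$ with $e_iX' = 0$ has entries summing to zero and hence cannot be non-negative unless it vanishes, leaving you stuck with whatever signs $b_i(X')^{+}$ happens to have. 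The fallback of rotating the Stiefel factor $U$ is only asserted, not proved, and it is not obviously available: for $c = d$ it reduces to asking whether some orthogonal $U$ makes $UQ^{1/2}(X')^{-1}$ entrywise non-negative, which fails for generic $X'$. So either you must add a hypothesis (e.g.\ $\mathbf{1}_c \notin \mathrm{col}(X')$, or the freedom to choose $X'$ as well as $A'$), or supply a genuinely new argument for the orthant constraint; as written, this step is a conjecture, and it is precisely the place where your write-up claims more than the paper actually establishes.
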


\begin{proposition}[Failed Generalization of SDC across GNNs]
\label{thm:sdc-gnns}
Consider least-squares regression with a linear GNN, $f^{C}_{W}(A, X)=C(A)XW$ parametrized by $W$; there always exists a condensed graph $\mathcal{S}^*$, such that the gradient matching objective for $f^{C}$ is achievable. However, if we train a new linear GNN $f^{C'}_{W}(A, X)$ with convolution matrix $C'(A')$ on $\mathcal{S}^*$, the relative error between the optimized model parameters of $f^{C'}_{W}$ on the real and condensed graphs is $\|W^{\mathcal{S}}_{C'}-W^{\mathcal{T}}_{C'}\|/\|W^{\mathcal{T}}_{C'}\|\geq \max\{\sigma_{\max}(Q)-1,1-\sigma_{\min}(Q)\}$, where $W^{\mathcal{T}}_{C'}=\arg\min_{W}\|\mathbf{y}-f^{C'}_{W}(A, X)\|^2_2$, $W^{\mathcal{S}}_{C'}=\arg\min_{W}\|\mathbf{y}'-f^{C'}_{W}(A', X')\|^2_2$, and $Q=\big(X^\top [C(A)]^\top [C(A)]X\big)\big(X^\top [C'(A)]^\top [C'(A)]X\big)^{-1}$.
\end{proposition}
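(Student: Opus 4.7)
The plan is to combine the achievability of the gradient matching objective with an explicit construction of the condensed graph, reduce the comparison to the Gram matrices $M = X^\top C(A)^\top C(A) X$ and $M_{C'} = X^\top C'(A)^\top C'(A) X$ that define $Q = M M_{C'}^{-1}$, and extract the stated lower bound via a reverse triangle inequality.

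First I would exploit the linearity of the regression gradient. Since $\mathcal{L}^{\mathrm{train}}_\mathcal{T}$ is a convex quadratic in $W$, its gradient is affine: $\nabla_W\mathcal{L}^{\mathrm{train}}_\mathcal{T}(W) = 2(MW - b)$ with $b = X^\top C(A)^\top y$, and analogously $\nabla_W\mathcal{L}^{\mathrm{train}}_\mathcal{S}(W) = 2(M'W - b')$. \cref{assum:achievability} supplies a non-degenerate trajectory on which the two affine maps agree, and since the trajectory spans the full parameter space $\Theta$ this forces $M' = M$ and $b' = b$. The immediate consequence $W^\mathcal{S}_C = (M')^{-1}b' = M^{-1}b = W^\mathcal{T}_C$ recovers \cref{thm:sdc-valid}.

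Next I would exhibit an explicit witness $\mathcal{S}^*$. Take $c = d$, $X' = I_d$, $A'$ chosen so that $C(A')$ equals a positive-definite square root of $M$, and $y' = C(A')^{-\top} b$; one checks directly that $M' = M$ and $b' = b$, so gradient matching is achievable. On this $\mathcal{S}^*$ the auxiliary $C'$-statistics become $M'_{C'} = C'(A')^\top C'(A')$ and $b'_{C'} = C'(A')^\top C(A')^{-\top} b$, yielding a closed form for $W^\mathcal{S}_{C'} = (M'_{C'})^{-1} b'_{C'}$.

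Finally I would compare $W^\mathcal{S}_{C'}$ with $W^\mathcal{T}_{C'} = M_{C'}^{-1} b_{C'}$. Because the proposition only asserts existence of $\mathcal{S}^*$, the labels $y$ provide enough freedom to independently tune $b$ and $b_{C'}$; one uses this slack to reduce the residual to the form $W^\mathcal{S}_{C'} - W^\mathcal{T}_{C'} = (Q - I)\,W^\mathcal{T}_{C'}$. Aligning $W^\mathcal{T}_{C'}$ with the top (respectively bottom) singular vector of $Q$ and invoking the reverse triangle inequality $\|(Q - I)v\| \geq \bigl|\|Qv\| - \|v\|\bigr|$ then yields $\|W^\mathcal{S}_{C'} - W^\mathcal{T}_{C'}\|/\|W^\mathcal{T}_{C'}\| \geq \sigma_{\max}(Q) - 1$ (respectively $1 - \sigma_{\min}(Q)$), delivering the claimed maximum. \textbf{The main obstacle} is the algebraic bridge in this last step: gradient matching only constrains the $C$-projection of $\mathcal{S}^*$, so the condensed $C'$-statistics $(M'_{C'}, b'_{C'})$ are governed entirely by how $C'$ and $C$ differ on the chosen $A'$; packaging that difference cleanly into the single matrix $Q$ requires carefully coupling the canonical choice $C(A') = M^{1/2}$ to the filter mismatch $C' - C$, and is the most delicate part of the argument.
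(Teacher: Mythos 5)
Your opening step (using achievability of the gradient-matching objective on a spanning trajectory to force $X'^\top C'^\top C'X' = X^\top C^\top CX$ and $X'^\top C'^\top \mathbf{y}' = X^\top C^\top\mathbf{y}$, then writing all optimizers in OLS closed form) is exactly the paper's, via \cref{thm:sdc-valid}. The gap is the one you flag yourself at the end, and it is not a mere technicality to be smoothed over: with your witness the bridge to $Q$ does not close. You take $X'=I_d$ and pick $A'$ so that $C(A')=M^{1/2}$; but then the condensed $C'$-statistics are $C'(A')^\top C'(A')$, governed by how the \emph{other} filter $C'$ acts on that particular $A'$, and nothing connects them to $X^\top[C'(A)]^\top[C'(A)]X$ --- the $C'$-Gram on the \emph{original} graph --- which is what appears in $Q$. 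The paper's construction differs in precisely the way that closes this: it invokes \cref{thm:sdc-overfit} to choose the condensed adjacency to be the identity, $A'=I_c$, and then uses the fact that every convolution filter in the family reduces to (a multiple of) the identity on the identity graph, so $\mathscr{C}(A')=C(A')$ for any new filter $\mathscr{C}$. Consequently the condensed $\mathscr{C}$-statistics coincide with the condensed $C$-statistics, which by gradient matching coincide with the original $C$-statistics, yielding $W^{\mathcal{S}}_{C'}=(X^\top C^\top CX)^{-1}X^\top C^\top\mathbf{y}$; the difference $W^{\mathcal{S}}_{C'}-W^{\mathcal{T}}_{C'}$ is then $(Q-I_d)$ applied to a fixed vector determined by the original data, and the singular-value bound follows. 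Your construction also quietly assumes $A'\mapsto C(A')$ is surjective enough to realize $M^{1/2}$ within the filter family, which is not guaranteed.

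A second, independent problem is your final step's appeal to ``the labels $y$ provide enough freedom to independently tune $b$ and $b_{C'}$'': the proposition quantifies existentially only over the condensed graph $\mathcal{S}^*$; the original dataset $(A,X,\mathbf{y})$ is given, so you may not re-choose $\mathbf{y}$ to align $W^{\mathcal{T}}_{C'}$ with an extreme singular vector of $Q$. The paper's argument does not rely on any such freedom --- it obtains the residual in the form $(Q-I_d)v$ for the specific vector $v=X^\top C^\top\mathbf{y}$ directly from the identity-adjacency construction and bounds from there.
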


\section{Hypergradients and Implicit Function Theorem}
\label{apd:hypergradients}

In this section, we give some additional background on the concepts of hyperparameter gradients (hypergradient for short) and the implicit function theorem (IFT).

We shall recall the notations we used in the main paper, which are summarized in~\cref{tab:notations}.

\begin{table}[!htbp]
\adjustbox{max width=\textwidth}{%
\renewcommand*{\arraystretch}{1.2}
\begin{tabular}{r|l}
\toprule
$\lambda, \theta$                            & Hyperparameters and NN parameters         \\
$\mathcal{T}, \mathcal{S}$                   & The original and synthetic datasets       \\
$\mathcal{L}^{\mathrm{train}}_{\mathcal{T}}, \mathcal{L}^{\mathrm{val}}_{\mathcal{T}}$ & The training and validation loss on the original dataset \\
$\mathcal{L}^{\mathrm{train}}_{\mathcal{S}}, \mathcal{L}^{\mathrm{val}}_{\mathcal{S}}$ & The training and validation loss on the synthetic dataset \\
$\theta^{\mathcal{T}}(\lambda):=\arg \min _\theta \mathcal{L}_{\mathcal{T}}^{\text {train }}(\theta, \lambda)$ & The optimized parameters on the original dataset, as a function of the hyperparameter \\
$\theta^{\mathcal{S}}(\lambda):=\arg \min _\theta \mathcal{L}_{\mathcal{S}}^{\text {train }}(\theta, \lambda)$ & The optimized parameters on the synthetic dataset, as a function of the hyperparameter \\
$\mathcal{L}_{\mathcal{T}}^*(\lambda):=\mathcal{L}_{\mathcal{T}}^{\mathrm{val}}\left(\theta^{\mathcal{T}}(\lambda), \lambda\right)$ & The validation loss on the original dataset, as a function of the hyperparameter $\lambda$ \\
$\mathcal{L}_{\mathcal{S}}^*(\lambda):=\mathcal{L}_{\mathcal{S}}^{\mathrm{val}}\left(\theta^{\mathcal{S}}(\lambda), \lambda\right)$ & The validation loss on the synthetic dataset, as a function of the hyperparameter $\lambda$ \\\bottomrule
\end{tabular}
}
\caption{\label{tab:notations}Notations.}
\end{table}

In~\cref{fig:hyper-gradients}, we visualize the geometrical process to define the hypergradients. First, in (a), we can plot the training loss $\mathcal{L}^{\text{train}}_{\mathcal{T}}(\theta,\lambda)$ as a two-variable function on the parameters $\theta$ and hyperparameter $\lambda$. For each hyperparameter $\lambda\in\Lambda$ (which is assumed to be a continuous interval in this case), we shall optimize the training loss to find an optimal model parameter, which forms the \textcolor{tab:blue}{blue} curve in $(\theta,\lambda)$-plane in (a). Now, we shall substitute the fitted parameter $\theta^{\mathcal{T}}(\lambda):=\arg \min _\theta \mathcal{L}_{\mathcal{T}}^{\text {train }}(\theta, \lambda)$ as a function of the hyperparameter into the validation loss $\mathcal{L}^{\mathrm{val}}_{\mathcal{T}}(\theta,\lambda)$. Geometrically, it is projecting the \textcolor{tab:blue}{blue} implicit optimal parameter curve onto the surface of the validation loss, as shown in (b). The projected \textcolor{tab:orange}{orange} curve is the validation loss as a one-variable function of the hyperparameter lambda $\mathcal{L}_{\mathcal{T}}^*(\lambda):=\mathcal{L}_{\mathcal{T}}^{\mathrm{val}}\left(\theta^{\mathcal{T}}(\lambda), \lambda\right)$. Finally, the \textcolor{tab:purple}{purple} curve represents the hyperparameter gradients, which is the slope of the tangent line on the \textcolor{tab:orange}{orange} validation loss curve.

\begin{figure}[!htbp]
\begin{floatrow}
\ffigbox[\textwidth]{%
    \begin{subfloatrow}[2]
        \ffigbox[0.5\columnwidth]{%
            \includegraphics[width=\linewidth]{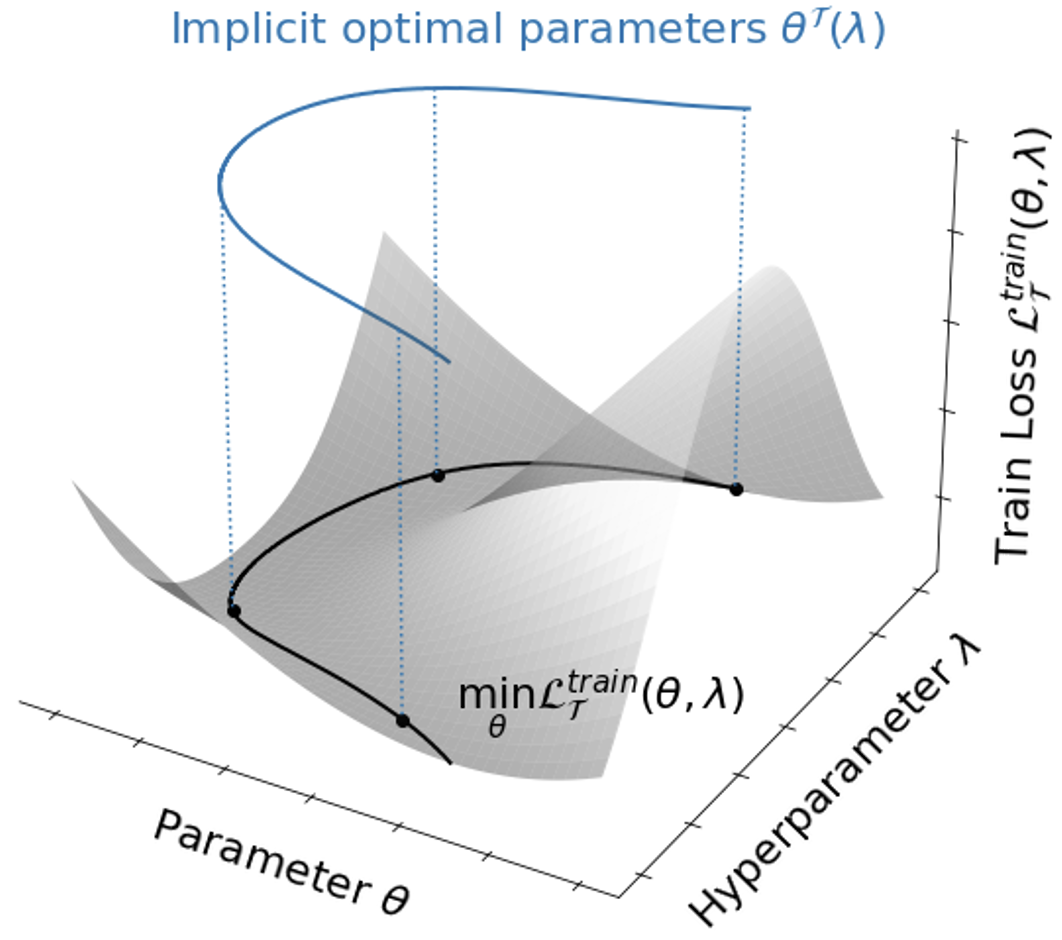}}{\caption{\label{fig:hyper-gradients-train}Train loss $\mathcal{L}^{\mathrm{train}}_{\mathcal{T}}$}}
        \ffigbox[0.5\columnwidth]{%
            \includegraphics[width=\linewidth]{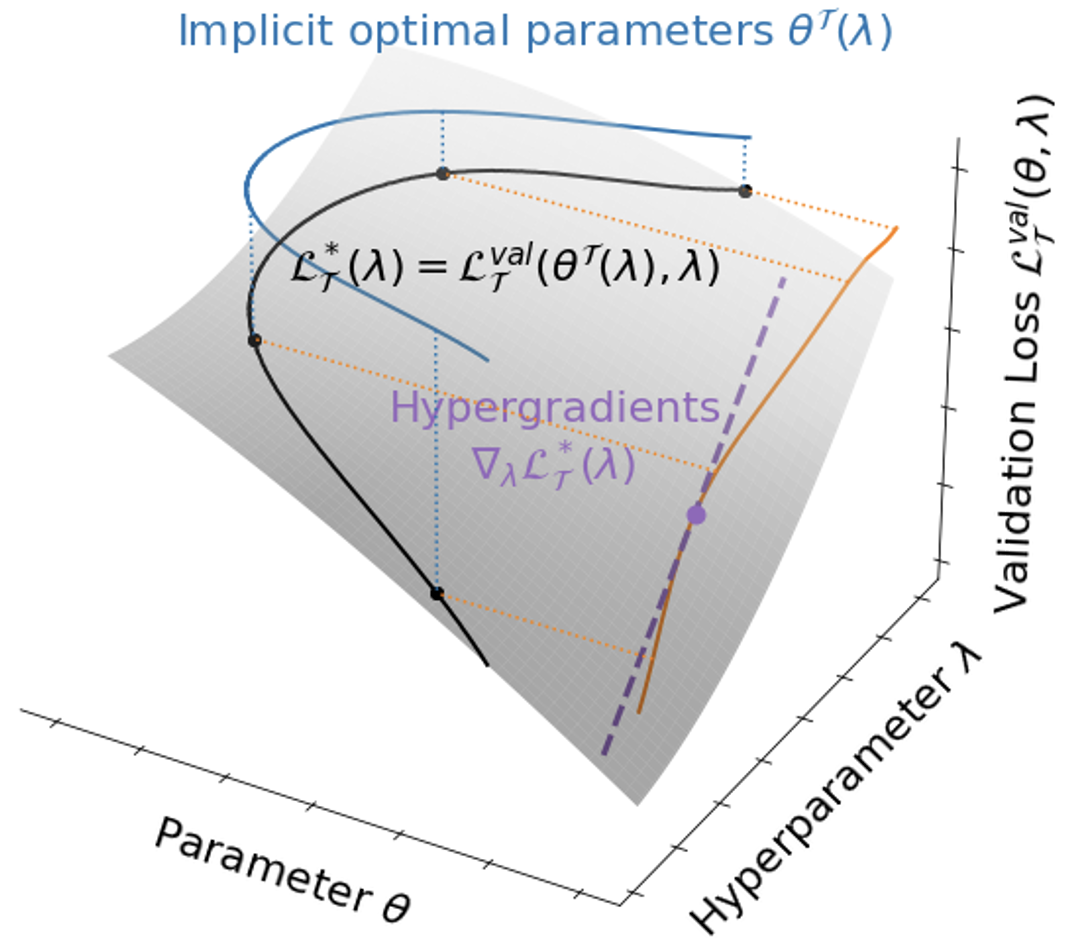}}{\caption{\label{fig:hyper-gradients-validation}Validation loss $\mathcal{L}^{\mathrm{val}}_{\mathcal{T}}$}}
    \end{subfloatrow}
}{%
    \caption{\label{fig:hyper-gradients}
    Loss landscape w.r.t. $\theta$ and $\lambda$. A hyperparameter $\lambda$ has an optimal parameter $\theta^{\mathcal{T}}(\lambda)$ \big(\textcolor{tab:blue}{blue} curve in $(\theta,\lambda)$-plane in \textbf{(a)}\big) that minimizes the train loss. In \textbf{(b)}, injecting optimal parameters $\theta^{\mathcal{T}}(\lambda)$ into the validation loss, we obtain a function of validation loss w.r.t. $\lambda$ \big(denoted as $\mathcal{L}^\star_{\mathcal{T}}(\lambda)
    $\big) in $(\mathcal{L}, \lambda)$-plane, shown as the \textcolor{tab:orange}{orange} curve. 
    The \textcolor{tab:purple}{purple} dash line illustrates the hypergradients, i.e., gradient of $\mathcal{L}_{\mathcal{T}}^\star(\lambda) 
    $ w.r.t. $\lambda$.}%
}
\end{floatrow}
\end{figure}
\section{More Theoretical Results}
\label{apd:proofs}

In this section, we provide the proofs to the theoretical results \cref{thm:sdc-valid,thm:sdc-1dcnns,thm:sdc-overfit,thm:sdc-gnns} and \cref{thm:equi-hga-hc}, together with some extended theoretical discussions, including generalizing the \emph{linear convolution regression problem} to non-convex losses and non-linear models (see~\cref{apd:proofs:non-convexlinear}).

To proceed, please recall the \emph{linear convolution regression problem} defined in~\cref{apd:prelim:linear}, the achievability of gradient-matching objective (\cref{eq:sdc-gm}) defined as~\cref{assum:achievability} in~\cref{apd:problem}.

\subsection{Validity of Standard Dataset Condensation}
\label{apd:proofs:sdc-valid}

As the first step, we verify the validity of the standard dataset condensation (SDC) using the gradient-matching objective~\cref{eq:sdc-gm} for the \emph{linear convolution regression problem}.

\begin{lemma}(Validity of SDC)
\label{thm:sdc-valid}
Consider least square regression with linear convolution model $f_{W}(A,X)=C(A)XW$ parameterized by $W$. If the gradient-matching objective of $f_{W}$ is achievable, then the optimizer on the condensed dataset $\mathcal{S}$, i.e., $W^\mathcal{S}=\arg\min_{W}\mathcal{L}_{\mathcal{S}}(W)$ is also optimal for the original dataset, i.e., $\mathcal{L}_{\mathcal{T}}(W^\mathcal{S})=\min_{W}\mathcal{L}_{\mathcal{T}}(W)$.
\end{lemma}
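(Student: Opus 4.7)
\begin{proofscratch}
The plan is to exploit the fact that for a linear convolution model with squared loss, both gradients $\nabla_W \mathcal{L}_\mathcal{T}(W)$ and $\nabla_W \mathcal{L}_\mathcal{S}(W)$ are \emph{affine} in $W$, and then to upgrade gradient matching on a spanning trajectory into gradient matching everywhere on $\Theta$. A straightforward calculation gives
\[
\nabla_W \mathcal{L}_\mathcal{T}(W) = 2(C(A)X)^\top\bigl(C(A)X\,W - \mathbf{y}\bigr),
\qquad
\nabla_W \mathcal{L}_\mathcal{S}(W) = 2(C(A')X')^\top\bigl(C(A')X'\,W - \mathbf{y}'\bigr),
\]
so both are of the form $g(W) = MW + b$ for constant $M,b$ depending on the respective dataset.

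Next I would invoke the achievability assumption (\cref{assum:achievability}). It gives a trajectory $(W_t)_{t=0}^{T-1}$ whose span covers $\Theta$ and on which $\nabla_W \mathcal{L}_\mathcal{S}(W_t) = \nabla_W \mathcal{L}_\mathcal{T}(W_t)$ for every $t$. Since two affine functions that agree on a spanning set of points must agree as affine maps, one concludes $M_\mathcal{S} = M_\mathcal{T}$ and $b_\mathcal{S} = b_\mathcal{T}$, i.e.\ $\nabla_W \mathcal{L}_\mathcal{S}(W) = \nabla_W \mathcal{L}_\mathcal{T}(W)$ for \emph{every} $W \in \Theta$ (a small argument via differencing the affine map at $0$ and at the trajectory points handles the constant term cleanly).

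Finally, $W^\mathcal{S}$ satisfies the first-order optimality condition $\nabla_W \mathcal{L}_\mathcal{S}(W^\mathcal{S}) = 0$, hence by the previous step also $\nabla_W \mathcal{L}_\mathcal{T}(W^\mathcal{S}) = 0$. Because $\mathcal{L}_\mathcal{T}$ is convex in $W$ (it is a quadratic in $W$ with positive semidefinite Hessian $2(C(A)X)^\top C(A)X$), any stationary point is a global minimizer, giving $\mathcal{L}_\mathcal{T}(W^\mathcal{S}) = \min_W \mathcal{L}_\mathcal{T}(W)$.

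The only real subtlety is the \emph{non-degeneracy / spanning} condition: I need the trajectory to span $\Theta$ (not merely affinely span) so that the affine map is pinned down including its constant term. If only affine spanning is available, one still recovers $M_\mathcal{S} = M_\mathcal{T}$ and $b_\mathcal{S} = b_\mathcal{T}$ by picking any two points in the span and differencing, so this is not a real obstacle but it is the step that must be spelled out carefully. Non-uniqueness of the minimizer (when $(C(A)X)^\top C(A)X$ is singular) is harmless: $W^\mathcal{S}$ just lands in the shared solution set of the two normal equations.
\end{proofscratch}
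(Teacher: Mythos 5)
Your proof follows essentially the same route as the paper's: both exploit that $\nabla_W\mathcal{L}_{\mathcal{T}}$ and $\nabla_W\mathcal{L}_{\mathcal{S}}$ are affine in $W$ and use the spanning trajectory from the achievability assumption to conclude $X^\top C^\top CX = X'^\top C'^\top C'X'$ and $X^\top C^\top \mathbf{y} = X'^\top C'^\top \mathbf{y}'$, i.e., that the two gradient maps coincide identically. The only difference is in the last step --- the paper equates the closed-form OLS solutions (implicitly assuming $X^\top C^\top CX$ is invertible), whereas you finish via stationarity plus convexity, which is marginally more robust in the singular case --- and the linear-versus-affine spanning subtlety you flag is glossed over in the paper's own proof as well.
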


\begin{proof}
In the \emph{linear convolution regression problem}, sum-of-squares loss is used, and $\mathcal{L}_{\mathcal{T}}(W)=\|CXW-\mathbf{y}\|_2^2$ (similarly $\mathcal{L}_{\mathcal{S}}(W)=\|C'X'W-\mathbf{y}'\|_2^2$ where $C'=C(A')$). We assume $X^\top C^\top CX\in\mathbb{R}^{d\times d}$ is invertible and we can apply the optimizer formula for ordinary least square (OLS) regression to find the optimizer $W^\mathcal{T}$ of $\mathcal{L}_{\mathcal{T}}(W)$ as,
\[
    W^\mathcal{T}=(X^\top C^\top CX)^{-1}X^\top C^\top\mathbf{y}.
\]
Also, we can compute the gradients of $\mathcal{L}_{\mathcal{T}}(W)$ w.r.t $W$ as,
\[
    \nabla_{W}\mathcal{L}_{\mathcal{T}}(W)=2X^\top C^\top (CXW-\mathbf{y}),
\]
and similarly for $\nabla_{W}\mathcal{L}_{\mathcal{S}}(W)$.

Given the achievability of the gradient-matching objective of $f_{W}$, we know there exists a non-degenerate trajectory $(W^{\mathcal{S}}_t)_{t=0}^{T-1}$ which spans the entire parameter space, i.e., $\text{span}(W^{\mathcal{S}}_0,\ldots,W^{\mathcal{S}}_{T-1})=\mathbb{R}^{d}$, such that 
the gradient-matching loss ({the objective of} \cref{eq:sdc-gm} {without expectation}) on this trajectory is $0$. Assuming $D(\cdot,\cdot)$ is the $L_2$ norm~\citep{zhao2020dataset}, this means,
\[
    \nabla_{W}\mathcal{L}_{\mathcal{T}}(W^\mathcal{S}_t)=\nabla_{W}\mathcal{L}_{\mathcal{S}}(W^\mathcal{S}_t)\quad\text{for}\quad t\in[T].
\]
Substitute in the formula for the gradients $\nabla_{W}\mathcal{L}_{\mathcal{T}}(W)$ and  $\nabla_{W}\mathcal{L}_{\mathcal{S}}(W)$, we then have,
\[
    X^\top C^\top (CXW^\mathcal{S}_t-\mathbf{y})=X'^\top C'^\top (C'X'W^\mathcal{S}_t-\mathbf{y}')\quad\text{for}\quad t\in[T].
\]
Since the set of $\{W^\mathcal{S}_t\}_{t=0}^{T-1}$ spans the complete parameter space $\mathbb{R}^{d}$, we can transform the set of vectors $\{\omega_t\cdot W^\mathcal{S}_t\}_{t=0}^{T-1}$ to the set of unit vectors $\{\mathbf{e}^d_i\}_{i=0}^{d-1}\in\mathbb{R}^d$ by a linear transformation. Meanwhile, the set of $T$ equations above can be transformed to,
\[
    X^\top C^\top (CX\mathbf{e}^d_i-\mathbf{y})=X'^\top C'^\top (C'X'\mathbf{e}^d_i-\mathbf{y}')\quad\text{for}\quad i\in[d].
\]
This directly leads to $X^\top C^\top CX=X'^\top C'^\top C'X'$ and $X^\top C^\top\mathbf{y}=X'^\top C'^\top\mathbf{y}'$.

Using the formula for the optimizers $W^\mathcal{T}$ and $W^\mathcal{S}$ above, we readily get,
\[
    W^\mathcal{T}=(X^\top C^\top CX)^{-1}X^\top C^\top\mathbf{y}=(X'^\top C'^\top C'X')^{-1}X'^\top C'^\top\mathbf{y}'=W^\mathcal{S}.
\]
And hence,
\[
    \mathcal{L}_{\mathcal{T}}(W^\mathcal{S})=\mathcal{L}_{\mathcal{T}}(W^\mathcal{T})=\min_{W}\mathcal{L}_{\mathcal{T}}(W),
\]
which concludes the proof.
\end{proof}

Despite its simplicity, \cref{thm:sdc-valid} directly verifies the validity of the gradient-matching formulation of standard dataset condensation on some specific learning problems. Although the gradient-matching formulation (\cref{eq:sdc-gm}) is an efficient but weaker formulation than the bilevel formulation of SDC (\cref{eq:sdc-bl}), we see it is strong enough for some of the \emph{linear convolution regression problem}. 

\subsection{Generalization Issues of SDC}
\label{apd:proofs:sdc-issue}

Now, we move forward and focus on the generalization issues of (the gradient-matching formulation of) the standard dataset condensation (SDC) across GNNs.

First, we prove the successful generalization of SDC across 1D-CNNs as follows, which is very similar to the proof of~\cref{thm:sdc-valid}.

\begin{prevproof}{thm:sdc-1dcnns}
In~\cref{thm:sdc-1dcnns}, we consider one-dimensional linear convolution models $f^{2K+1}(X)=(\sum_{k=-K}^{k=K}\alpha^{(k)} P^k)XW$ parameterized by $\alpha\in\mathbb{R}^{p}$ and $W\in\mathbb{R}^{d}$ (where $p=2K+1$). If we denote,
\[
    C=\sum_{k=-K}^{k=K}\alpha^{(k)} P^k\quad\text{and}\quad\theta=[\alpha,W]\in\mathbb{R}^{p+d}
\]
then from the proof of~\cref{thm:sdc-valid} we know the gradients of $\mathcal{L}_{\mathcal{T}}(W)$ w.r.t $W$ is again,
\[
    \nabla_{W}\mathcal{L}_{\mathcal{T}}(W)=2X^\top C^\top (CXW-\mathbf{y}).
\]

We know the achievability of the gradient-matching objective means there exists a non-degenerate trajectory $(\theta^{\mathcal{S}}_t)_{t=0}^{T-1}$ which spans the entire parameter space, i.e., $\text{span}(\theta^{\mathcal{S}}_0,\ldots,\theta^{\mathcal{S}}_{T-1})=\mathbb{R}^{p+d}$. By decomposing $\theta^{\mathcal{S}}_t$ into $[\alpha^{\mathcal{S}}_t, W^{\mathcal{S}}_t]$, we know that there exists $(\alpha^{\mathcal{S}}_t)_{t=0}^{T-1}$ which spans $\mathbb{R}^{p}$ and there exists $(W^{\mathcal{S}}_t)_{t=0}^{T-1}$ which spans $\mathbb{R}^{d}$.

Since the gradient-matching objective is minimized to $0$ on $(W^{\mathcal{S}}_t)_{t=0}^{T-1}$ which spans $\mathbb{R}^{d}$, following the same procedure as the proof of~\cref{thm:sdc-valid}, we again obtain,
\[
    X^\top C^\top \mathbf{y}=X'^\top C'^\top \mathbf{y}'.
\]

Meanwhile, since the same gradient-matching objective is also minimized to $0$ on $(\alpha^{\mathcal{S}}_t)_{t=0}^{T-1}$ which spans $\mathbb{R}^{p}$, we have,
\[
    X^\top\Big(\sum_{k=-K}^{k=K}(\alpha^{\mathcal{S}}_t)^{(k)} P^k\Big)^\top \mathbf{y}=X'^\top \Big(\sum_{k=-K}^{k=K}(\alpha^{\mathcal{S}}_t)^{(k)} P'^k\Big)^\top \mathbf{y}'\quad\text{for}\quad t\in[T].
\]
Again by linear combining the above $T$ equations and because $(\alpha^{\mathcal{S}}_t)_{t=0}^{T-1}$ can be transformed to the unit vectors in $\mathbb{R}^p$, we have,
\[
    X^\top \big(P^k\big)^\top \mathbf{y}=X'^\top \big(P'^k\big)^\top \mathbf{y}'\quad\text{for}\quad k=-K,\ldots,K.
\]
Hence, for any new trajectory $(\alpha'^{\mathcal{S}}_t)_{t=0}^{T-1}$ which spans $\mathbb{R}^{p'}$ where $p'=2K'+1$, by linear combining the above equations, we have,
\[
    X^\top\Big(\sum_{k=-K}^{k=K}(\alpha'^{\mathcal{S}}_t)^{(k)} P^k\Big)^\top \mathbf{y}=X'^\top \Big(\sum_{k=-K}^{k=K}(\alpha'^{\mathcal{S}}_t)^{(k)} P'^k\Big)^\top \mathbf{y}'\quad\text{for}\quad t\in[T'].
\]
With similar procedure for the $X^\top C^\top CX$ part, we conclude that on the new trajectory $(\theta'^{\mathcal{S}}_t)_{t=0}^{T-1}$ 
\[
    \nabla_{W}\mathcal{L}_{\mathcal{T}}(\alpha,W)=\nabla_{W}\mathcal{L}_{\mathcal{S}}(\alpha,W).
\]

It remains to prove that on any new trajectory $\nabla_{\alpha}\mathcal{L}_{\mathcal{T}}(\alpha,W)=\nabla_{\alpha}\mathcal{L}_{\mathcal{S}}(\alpha,W)$. Only need to note that,
\[
    \nabla_{\alpha^{(k)}}\mathcal{L}_{\mathcal{T}}(\alpha,W)=2W^\top X^\top P^k(CXW-\mathbf{y}).
\]
Hence, by the $p$ equations above, we can readily show,
\[
    X^\top P^k \mathbf{y} = X'^\top P'^k \mathbf{y}'\quad\text{for}\quad k=-K,\ldots,K.
\]
Again with a similar procedure for the $X^\top C^\top CX$ part, we finally can show that on the new trajectory $(\theta'^{\mathcal{S}}_t)_{t=0}^{T-1}$ 
\[
    \nabla_{\alpha}\mathcal{L}_{\mathcal{T}}(\alpha,W)=\nabla_{\alpha}\mathcal{L}_{\mathcal{S}}(\alpha,W).
\]
This concludes the proof.
\end{prevproof}

Then we focus on the linear GNNs, we want to verify the insight that the learned adjacency $A'$ of the condensed graph has ``too many degrees of freedom'' so that can easily overfit the gradient-matching objective, no matter what learned synthetic features $X'$ are. Again, the proof of~\cref{thm:sdc-overfit} uses some results in the proof of~\cref{thm:sdc-valid}.

\begin{prevproof}{thm:sdc-overfit}
Now, we consider a linear GNN defined as $f(A,X)=C(A)XW$. From the proof of~\cref{thm:sdc-valid}, we know that for the gradient-matching objective of $f$ to be achievable, it is equivalent to require that,
\[
    X^\top C^\top CX=X'^\top C'^\top C'X'\quad\text{and}\quad X^\top C^\top \mathbf{y}=X'^\top C'^\top \mathbf{y}',
\]
where $C$ and $C'$ refer to $C(A)$ and $C(A')$ respectively.

Firstly we note that once we find $C'$ and $X'$ such that satisfy the first condition $X^\top C^\top CX=X'^\top C'^\top C'X'$, we can always find $\mathbf{y}'\in\mathbb{R}^c$ such that $X^\top C^\top \mathbf{y}=X'^\top C'^\top \mathbf{y}'$ since $X^\top C^\top \mathbf{y}\in\mathbb{R}$ is a scalar.

Now, we focus on finding the convolution matrix $C'$ and the node feature matrix $X'$ of the condensed synthetic graph to satisfy $X^\top C^\top CX=X'^\top C'^\top C'X'$. We assume $n\gg c\gg d$ and consider the diagonalization of $X^\top C^\top CX\in\mathbb{R}^{d\times d}$. Since $X^\top C^\top CX$ is positive semi-definite, it can be diagonalized as $X^\top C^\top CX=VS^2V^\top$ where $V\in\mathbb{R}^{d}$ is an orthogonal matrix and $S\in\mathbb{R}^{d}$ is a diagonal matrix that $S=\diag{s_1,\ldots,s_d}$.

For any (real) semi-unitary matrix $U\in\mathbb{R}^{c\times d}$ such that $U^\top U=I_d$, we can construct $C'X'=USV^\top\in\mathbb{R}^{c\times d}$ and we can easily verify they satisfy the condition,
\[
    X'^\top C'^\top C'X' = VSU^\top USV^\top=VS^2V^\top=X^\top C^\top CX.
\]

Then since $X'$ is full ranked, for any $X'$, by considering the singular-value decomposition of $X'$, we see that we can always find a convolution matrix $C'$ such that $C'X'=USV$ and this concludes the proof.
\end{prevproof}

Finally, we use some results of~\cref{thm:sdc-overfit} to prove~\cref{thm:sdc-gnns}, the failure of SDC when generalizing across GNNs.

\begin{prevproof}{thm:sdc-gnns}
We prove this by two steps. 

For the first step, we aim to show that there always exist a condensed synthetic dataset $\mathcal{S}$ such that achieves the gradient-matching objective but the learned adjacency matrix $A'=I_c$ is the identity matrix. Clearly this directly follows form the proof of~\cref{thm:sdc-overfit}, where we only require $C'X'=USV$ (see the proof of~\cref{thm:sdc-overfit} for details). If the learned adjacency matrix $A'=I_c$, the for any GNNs, the corresponding convolution matrix $C'$ is also (or proportional to) identity, thus we only need to set the learned node feature matrix $X'=USV$ to satisfy the condition. The first step is proved.

For the second step, we evaluate the relative estimation error of the optimal parameter when transferred to a new GNN $f^{\mathscr{C}}_{W}$ with convolution filter $\mathscr{C}(\cdot)$, i.e., $\|W^{\mathcal{S}}_{\mathscr{C}}-W^{\mathcal{T}}_{\mathscr{C}}\|/\|W^{\mathcal{T}}_{\mathscr{C}}\|$. Using the formula for the optimal parameter in the proof of~\cref{thm:sdc-valid} again, we have,
\[
    W^\mathcal{T}_\mathscr{C}=(X^\top \mathscr{C}^\top \mathscr{C}X)^{-1}X^\top \mathscr{C}^\top\mathbf{y},
\]
and
\[
    W^\mathcal{S}_\mathscr{C}=(X'^\top \mathscr{C}'^\top \mathscr{C}'X)^{-1}X'^\top \mathscr{C}'^\top\mathbf{y}',
\]
where $\mathscr{C}'=\mathscr{C}(A')=\mathscr{C}(I_c)=C(I_c)$ (the last equation use the fact that the convolution matrix of GNNs are the same if the underlying graph is identity).

Moreover, by the validity of SDC on $f^{C}_W$, we know, (see the proof of~\cref{thm:sdc-valid} for details),
\[
    X'^\top C'^\top C'X'=X^\top C^\top CX\quad\text{and}\quad X'^\top C'^\top\mathbf{y}'=X^\top C^\top \mathbf{y}
\]
Thus, altogether we derive that $X'^\top \mathscr{C}'^\top \mathscr{C}'X = X^\top C^\top CX$ and $X'^\top \mathscr{C}'^\top \mathbf{y}' = X^\top C^\top \mathbf{y}$. And therefore,
\[
    W^\mathcal{S}_\mathscr{C}=(X^\top C^\top CX)^{-1}X^\top C^\top\mathbf{y}.
\]
Now, note that,
\[
\begin{split}
    &\|W^{\mathcal{S}}_{\mathscr{C}}-W^{\mathcal{T}}_{\mathscr{C}}\|/\|W^{\mathcal{T}}_{\mathscr{C}}\|\\
    =&\Big\|\Big(\big(X^\top [C(A)]^\top [C(A)]X\big)\big(X^\top [\mathscr{C}(A)]^\top [\mathscr{C}(A)]X\big)^{-1})-I_d\Big)X^\top C^\top\mathbf{y}\Big\|/\|X^\top C^\top\mathbf{y}\|\\
    \geq&\max\{\sigma_{\max}(Q)-1,1-\sigma_{\min}(Q)\} \\
\end{split}
\]
where $Q=\big(X^\top [C(A)]^\top [C(A)]X\big)\big(X^\top [\mathscr{C}(A)]^\top [\mathscr{C}(A)]X\big)^{-1}$. This concludes the proof.
\end{prevproof}

\subsection{Validity of HCDC}
\label{apd:proofs:hcdc-issue}

Finally, we complete the proof of~\cref{thm:equi-hga-hc} with more detials.

\begin{prevproof}{thm:equi-hga-hc}
Firstly, we prove the \emph{necessity} by contradiction. 

If there exists $\lambda_0\in\tilde{\Lambda}$ s.t. the two gradient vectors are not aligned at $\lambda_0$, then there exists small perturbation $\Delta\lambda_0$ such that $\mathcal{L}_{\mathcal{T}}^*(\lambda_0+\Delta\lambda_0)-\mathcal{L}_{\mathcal{T}}^*(\lambda_0)$ and $\mathcal{L}_{\mathcal{S}}^*(\lambda_0+\Delta\lambda_0)-\mathcal{L}_{\mathcal{S}}^*(\lambda_0)$ have different signs.

Secondly, we prove the \emph{sufficiency} by path-integration. 

For any pair $\lambda_1\neq\lambda_2\in\tilde{\Lambda}$, we have a path $\gamma(\lambda_1,\lambda_2)\in\tilde{\Lambda}$ from $\lambda_2$ and $\lambda_1$, then integrating hypergradients $\nabla_\lambda\mathcal{L}_{\mathcal{T}}^*(\lambda)$ along the path recovers the hyperparameter-calibration condition.
More specifically, along the path we have $\mathcal{L}_{\mathcal{T}}^*(\lambda_1)-\mathcal{L}_{\mathcal{T}}^*(\lambda_2)=\int_{\gamma(\lambda_1,\lambda_2)}\nabla_\lambda\mathcal{L}_{\mathcal{T}}^*(\lambda)\mathrm{d}\lambda$ (similar for $\nabla_\lambda\mathcal{L}_{\mathcal{S}}^*(\lambda)$).
Thus we have,
\[
\begin{split}
     &(\mathcal{L}_{\mathcal{T}}^*(\lambda_1)-\mathcal{L}_{\mathcal{T}}^*(\lambda_2))(\mathcal{L}_{\mathcal{S}}^*(\lambda_1)-\mathcal{L}_{\mathcal{S}}^*(\lambda_2))\\
     =&\Big(\int_{\gamma(\lambda_1,\lambda_2)}\nabla_\lambda\mathcal{L}_{\mathcal{T}}^*(\lambda)\mathrm{d}\lambda\Big)\Big(\int_{\gamma(\lambda_1,\lambda_2)}\nabla_\lambda\mathcal{L}_{\mathcal{S}}^*(\lambda)\mathrm{d}\lambda\Big)\\
     \geq&\int_{\gamma(\lambda_1,\lambda_2)}\big\langle\sqrt{\nabla_\lambda\mathcal{L}_{\mathcal{T}}^*(\lambda)},\sqrt{\nabla_\lambda\mathcal{L}_{\mathcal{S}}^*(\lambda)}\big\rangle\mathrm{d}\lambda\\
     \geq&0
\end{split}
\]
the second last inequality by Cauchy-Schwarz inquality and the last inequality by $\cos(\nabla_\lambda\mathcal{L}_{\mathcal{T}}^*(\lambda),\nabla_\lambda\mathcal{L}_{\mathcal{T}}^*(\lambda))=0$ for any $\lambda\in\gamma(\lambda_1,\lambda_2)\in\tilde{\Lambda}$.

This concludes the proof.
\end{prevproof}

\subsection{Generalize to Non-Convex and Non-Linear Case}
\label{apd:proofs:non-convexlinear}

Although the results above are obtained for least squares loss and linear convolution model, it \emph{still reflects the nature of general non-convex losses and non-linear models}.
Since dataset condensation is effectively matching the local minima $\{\theta^{\mathcal{T}}\}$ of the original loss $\mathcal{L}^{train}_{\mathcal{T}}(\theta,\psi)$ with the local minima $\{\theta^{\mathcal{S}}\}$ of the condensed loss $\mathcal{L}^{train}_{\mathcal{S}}(\theta,\psi)$, within the small neighborhoods surrounding the pair of local minima $(\theta^{\mathcal{T}},\theta^{\mathcal{S}})$, we can approximate the non-convex loss and non-linear model with a convex/linear one respectively.
Hence the generalizability issues with convex loss and liner model may hold.

\section{\ouralgo on Discrete and Continuous Search Spaces}
\label{apd:implementations}

In this subsection, we illustrate how to tackle the two types of search spaces: (1) discrete and finite $\Lambda$ and (2) continuous and bounded $\Lambda$, respectively, illustrated by the practical search spaces of ResNets on images and Graph Neural Networks (GNNs) on graphs.

\para{Discrete search space of neural architectures on images.}
Typically the neural architecture search (NAS) problem aims to find the optimal neural network architecture with the best validation performance on a dataset from a large set of candidate architectures. 
One may simply train the set of $p$ pre-defined architectures $\{f^{(i)} \mid i=1,\ldots,p\}$ and rank their validation performance.
We can transform this problem as a continuous HPO, by defining an ``interpolated'' model (i.e., a super-net~\citep{wang2020rethinking}) $f^{\lambda}_\theta$, where hyperparameters $\lambda=[\lambda^{(1)},\ldots,\lambda^{(p)}]\in\Lambda$ and $\theta$ is the model parameters.
This technique is known as the differentiable NAS approach (see~\cref{apd:extend-related}), e.g., DARTS~\citep{liu2018darts}, which usually follows a cell-based search space~\citep{zoph2018learning} and continuously relaxes the original discrete search space.
Subsequently, in~\citep{xie2018snas,dong2019searching}, the Gumbel-softmax trick~\citep{jang2016categorical,maddison2016concrete} is applied to approximate the hyperparameter gradients.
We apply the optimization strategy in GDAS~\citep{dong2019searching}, which also provides the approximations of the hypergradients in our experiments.

\para{Continuous search space of graph convolution filters.}
Many graph neural networks (GNNs) can be interpreted as performing message passing on node features, followed by feature transformation and an activation function.
In this regard, these GNNs differ from each other by choice of convolution matrix (see~\cref{apd:problem} for details).
We consider the problem of searching for the best convolution filter of GNNs on a graph dataset.
In~\cref{apd:problem,apd:proofs}, we theoretically justify that dataset condensation for HPO is challenging on graphs due to overfitting issues. 
Nonetheless, this obstacle is solved by~\ouralgo.

A natural continuous search space of convolution filters often exists in GNNs, e.g., when the candidate convolution filters can be expressed by a generic formula.
For example, we make use of a truncated powers series of the graph Laplacian matrix to model a wide range of convolution filters, as considered in ChebNet~\citep{defferrard2016convolutional} or SIGN~\citep{frasca2020sign}.
Given the differentiable generic formula of the convolution filters, we can treat the convolution filter (or, more specifically, the parameters in the generic formula) as hyperparameters and evaluate the hypergradients using the implicit differentiation methods discussed in \cref{ssec:hypergrad-evaluation}.

\section{Efficient Design of HCDC Algorithm}
\label{sapd:efficient}

For continuous hyperparameters, $\Lambda$ itself is usually compact and connected, and the minimal extended search space is $\tilde{\Lambda}=\Lambda$.
Consider a discrete search space $\Lambda$, which consists of $p$ candidate hyperparameters.
We can naively construct $\tilde{\Lambda}$ as $O(p^2)$ continuous paths connecting pairs of candidate hyperparameters (see~\cref{fig:trajectory-matching-discrete} in~\cref{sapd:efficient} for illustration). 
This is undesirable due to its quadratic complexity in $p$; for example, the number of candidate architectures $p$ is often a large number for practical NAS problems.% (e.g., NAS-Bench-201~\citep{dong2020bench} consists of 15,625 candidate architectures).

We propose a construction of $\tilde{\Lambda}$ with linear complexity in $p$, which works as follows.
For any $i\in[p]$, we construct a ``representative'' path, named $i$-th HPO trajectory, which starts from $\lambda^\mathcal{S}_{i,0}=\lambda_i\in\Lambda$ and updates through $\lambda^\mathcal{S}_{i,t+1}\leftarrow\lambda^\mathcal{S}_{i,t}-\eta\nabla_\lambda\mathcal{L}_{\mathcal{S}}^*(\lambda^\mathcal{S}_{i,t})$, shown as the \textcolor{tab:orange}{orange} dashed lines in~\cref{fig:trajectory-matching-discrete}.
We assume all of the $p$ trajectories will approach the same or equivalent optima $\lambda^{\mathcal{S}}$, forming ``connected'' paths (i.e., \textcolor{tab:orange}{orange} dashed lines which merge at the optima $\lambda^{\mathcal{S}}$) between any pair of hyperparameters $\lambda_i\neq\lambda_j\in\Lambda$. 
This construction is also used in a continuous search space (as shown in \cref{fig:trajectory-matching-continuous}) to save computation (except that we have to select the starting randomly points $\lambda_i\sim\mathbb{P}_{\Lambda}$).

\begin{figure}[!htbp]
\begin{floatrow}
\ffigbox[\textwidth]{%
    \begin{subfloatrow}[2]
        \ffigbox[0.45\columnwidth]{%
            \includegraphics[width=\linewidth]{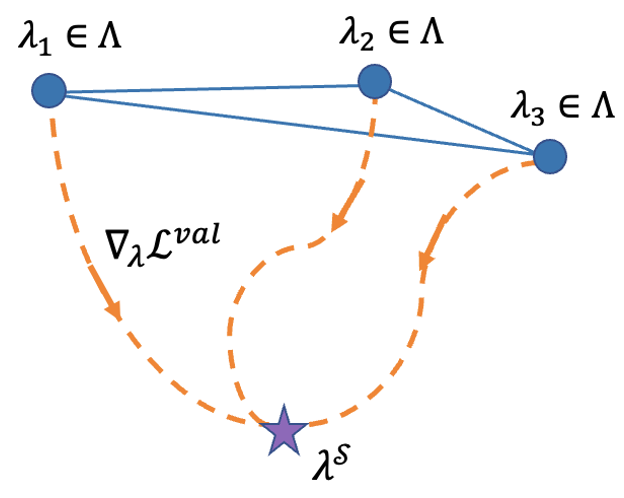}}{
           \vspace{-2em} \caption{\label{fig:trajectory-matching-discrete}Discrete $\Lambda=\{\lambda_i\}$}}
            \hfill
        \ffigbox[0.45\columnwidth]{%
            \includegraphics[width=\linewidth]{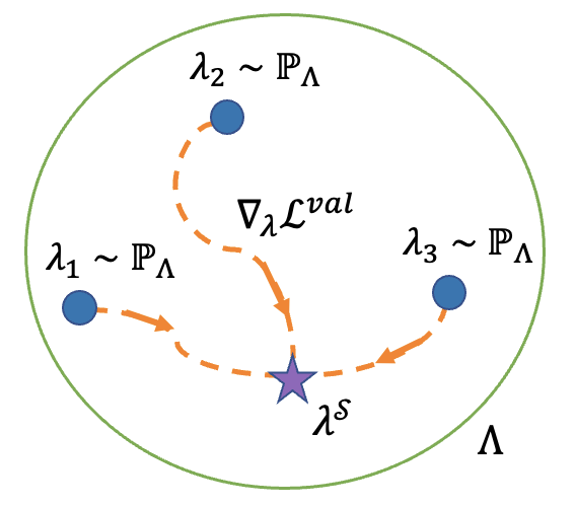}}{
           \vspace{-2em} \caption{\label{fig:trajectory-matching-continuous}Continuous $\Lambda\ni\lambda^{\mathcal{S}}$}}
    \end{subfloatrow}
}{%
    \caption{\label{fig:trajectory-matching}
    Illustration of the constructed extended search space $\tilde{\Lambda}$ illustrated as the \textcolor{tab:orange}{orange} trajectory for both \textbf{a)} discrete $\Lambda$ and \textbf{(b)} continuous $\Lambda$.
    The trajectory starts from $\lambda^\mathcal{S}_{i,0}=\lambda_i\in\Lambda$ for discrete $\Lambda$ (or random points for continuous $\Lambda$), and updates through $\lambda^\mathcal{S}_{i,t+1}\leftarrow\lambda^\mathcal{S}_{i,t}-\eta\nabla_\lambda\mathcal{L}_{\mathcal{S}}^*(\lambda^\mathcal{S}_{i,t})$.}%
}
\end{floatrow}
\end{figure}

\section{Complexity Analysis of HCDC Algorithm}
\label{apd:complexity-analysis}

In this section, we provide additional details on the complexity analysis of the HCDC algorithm~\cref{alg:hcdc}. Following the algorithm pseudocode, we consider the discrete hyperparameter search space of size $p$. The overall time complexity of HCDC is proportional to this size $p$ of the hyperparameter search space.

If we denote the dimensionality of the model parameters $\theta$ and hyperparameters $\lambda$ by $P$ and $H$ respectively. We know the time complexity of the common model parameter update is $O(P)$. Based on~\citep{lorraine2020optimizing}, the hyperparameter update~\cref{alg:hcdc:update-hyperparameters} needs $O(P+H)$ time (since we fixed the truncated power of the Neumann series approximation as constant) and $O(P+H)$ memory. In~\cref{alg:hcdc:update-synthetic-set}, we need another backpropagation to take gradients of $\nabla_\lambda\mathcal{L}^{\text{val}}_{\mathcal{S}}(\theta,\lambda)$ w.r.t. $\mathcal{S}^{\text{val}}$. This update is performed in a mini-batch manner and supposes the dimensionality of validation samples in the mini-batch is $B$, the~\cref{alg:hcdc:update-synthetic-set} requries $O(HB)$ time and memory. 
\section{Implementation Details}
\label{apd:implement-detials}

In this section, we list more implementation details on the experiments in~\cref{sec:experiments}.

For the synthetic experiments on CIFAR-10, we randomly split the CIFAR-10 images into $M=20$ splits and perform cross-validation. For the baseline methods (Random, SDC-GM, SDC-DM), the dataset condensation is performed independently for each split. For \ouralgo, we first condense the training set of the synthetic dataset by SDC-GM. Then, we learn a separate validation set with $1/M$-size of the training set and train with the \ouralgo objective on the $M$-HPO trajectories as described in~\cref{sec:algorithm}. We report the correlation between the ranking of splits (in terms of their validation performance on this split). For the Early-Stopping method, we only train the same number of iterations as the other methods (with the same batchsize), which means there are only $c/n*500$ epochs.

For the experiments about finding the best convolution filter on (large) graphs, we create the set of ten candidate convolution filters as (see~\cref{tab:gnns} for definitions and references) GCN, SAGE-Mean, SAGE-Max, GAT, GIN-$\epsilon$, GIN-0, SGC(K=2), SGC(K=3), ChebNet(K=2), ChebNet(K-3). The implementations are provided by PyTorch Geometric \url{https://pytorch-geometric.readthedocs.io/en/latest/modules/nn.html}. We also select the GNN width from $\{128, 256\}$ and the GNN depth from $\{2, 4\}$ so there are $10\times2\times2=40$ models in total. 

For the experiments about speeding up off-the-shelf graph architecture search algorithms, we adopt GraphNAS~\citep{gao2019graphnas} together with their proposed search space from their official repository \url{https://github.com/GraphNAS/GraphNAS}. We apply to the ogbn-arxiv graph with condensation ratio $c_{train}/n=0.5\%$.

\section{More experiments}
\label{apd:more-experiments}

\textbf{Synthetic experiments on CIFAR-10.} We first consider a synthetically created set of hyperparameters on the image dataset, CIFAR-10. Consider the $M$-fold cross-validation, where a fraction of $1/M$ samples are used as the validation split each time. The $M$-fold cross-validation process can be modeled by a set of $M$ hyperparameters $\{\varphi_i\in\{0,1\}\mid i=1,\ldots,M\}$, where $\varphi_i=1$ if and only if the $i$-th fold is used for validation. The problem of finding the best validation performance among the $M$ results can be modeled as a hyperparameter optimization problem with a discrete search space $|\Psi|=M$. We compare \ouralgo with the gradient-matching~\citep{zhao2020dataset} and distribution matching~\citep{zhao2021datasetb} baselines. We also consider a uniform random sampling baseline and an early-stopping baseline where we train only $c/n*500$ epochs but on the original dataset. The results of $M=20$ and $c/n=1\%$ is reported in~\cref{tab:cv-image}, where we see \ouralgo achieves the highest rank correlation.

\begin{table}[H]
\adjustbox{max width=0.5\textwidth}{%
{\renewcommand{\arraystretch}{1.1}%
\begin{tabular}{rcc} \toprule
                            & \multicolumn{2}{c}{Ratio ($c_{\mathrm{train}}/n$)}     \\
    Method                  & 2\%           & 4\%                           \\ \midrule
    Random                  & $-0.03$       & $0.07$                        \\
    SDC-GM                  & $0.64$        & $0.78$                        \\
    SDC-DM                  & $0.77$        & $0.86$                        \\
    Early-Stopping          & $0.11$        & $0.24$                        \\
    \ouralgo                & $0.91$        & $0.94$                        \\ \bottomrule
\end{tabular}
}}
\caption{\label{tab:cv-image}The rank correlation and validation performance on the original dataset of the $M$-fold cross-validation ranked/selected on the condensed dataset on CIFAR-10.}
\end{table}

\clearpage
\newpage

\bibliography{reference}
\bibliographystyle{ims}

\end{document}